\documentclass[twoside]{article}

\usepackage[preprint]{aistats2026}
\usepackage[round]{natbib}

\usepackage{amsmath,amssymb,amsthm,mathtools,bm}
\usepackage{xspace}
\usepackage{booktabs}
\usepackage{graphicx}
\usepackage{pgf}
\usepackage{enumitem}
\usepackage{hyperref}
\usepackage{xcolor}
\usepackage{float}
\usepackage{algorithm}
\usepackage{algpseudocode}

\newtheorem{theorem}{Theorem}
\newtheorem{proposition}[theorem]{Proposition}
\newtheorem{lemma}[theorem]{Lemma}
\newtheorem{corollary}[theorem]{Corollary}

\theoremstyle{definition}

\newtheorem{remark}[theorem]{Remark}

\newcommand{\Hcal}{\mathcal H}
\newcommand{\E}{\mathbb E}
\newcommand{\Pp}{\mathbb P}
\newcommand{\R}{\mathbb R}
\newcommand{\norm}[1]{\left\lVert #1\right\rVert}
\newcommand{\ip}[2]{\left\langle #1,#2\right\rangle}
\newcommand{\argmin}{\mathop{\mathrm{argmin}}}
\newcommand{\tr}{\mathrm{tr}}

\newcommand{\ind}{\mathbf 1}
\newcommand{\op}{\mathrm{op}}

\begin{document}

% If your paper is accepted and the title of your paper is very long,
% the style will print as headings an error message. Use the following
% command to supply a shorter title of your paper so that it can be
% used as headings.
%
\runningtitle{ }

% If your paper is accepted and the number of authors is large, the
% style will print as headings an error message. Use the following
% command to supply a shorter version of the author names so that
% they can be used as headings (for example, use only the surnames)
%
\runningauthor{ }

\twocolumn[

\aistatstitle{HOMER: Huber-of-Means for Efficient and Robust Estimation in Hilbert Spaces}

\aistatsauthor{ Kisung You \And Boram Cho}
\aistatsaddress{ Baruch College \And Yale University} ]
%\aistatsaddress{ Baruch College \And The Graduate Center, City University of New York } ]

\begin{abstract}
Heavy tails weaken high-confidence control for the empirical mean. Geometric median-of-means (MOM) also lacks a threshold that moves toward mean efficiency. We propose \emph{HOMER}, or Huber-of-Means for Efficient and Robust Estimation. HOMER aggregates block means through a radial Huber center. Its canonical and pseudo-Huber forms bound each block score and interpolate between median-like robustness and the empirical mean. We establish a Hilbert-space majority theorem and a MOM-order deviation bound under a finite second moment. Canonical HOMER recovers the sample mean inside its quadratic region. Pseudo-HOMER approaches the sample mean as the threshold grows. It also admits asymptotic linearity and consistent sandwich covariance estimation around the population block-Huber target. Under a finite third moment, fixed finite-dimensional projections support mean inference at the usual parametric rate. This result requires growing block sizes and counts, with block sizes increasing faster. Heavy-tailed simulations show that HOMER remains stable when a minority of block summaries is displaced. On clean Gaussian data, both versions closely approach the empirical mean's efficiency. Finite-block sandwich intervals undercovered, especially for skewed functional data. Further studies show failure when contamination affects most blocks or compromises ordinary within-block means.
\end{abstract}

%----%----%----%----%----%----%----%----%----%----%----%----%----%
\section{INTRODUCTION}

Mean estimation under heavy tails is a basic problem in statistics and machine learning. The empirical mean is natural in Euclidean spaces, functional data spaces, and reproducing kernel Hilbert spaces. Under only a finite second moment, however, its high-confidence behavior is poor. The median-of-means (MoM) principle  addresses this instability by dividing the sample into blocks. It computes each block mean and combines them through a robust median-type rule. In Hilbert and Banach spaces, the standard construction uses the geometric median of the block means \citep{minsker_2015_GeometricMedianRobust,minsker_2024_GeometricMedianApplications}.

MOM separates the problem into two parts. First, each block mean needs only weak accuracy, usually obtained through Chebyshev's inequality. Second, the geometric median converts a majority of weakly accurate blocks into a high-confidence estimator. Together, these provide exponential confidence amplification.

We consider whether other aggregation rules can replace the outer geometric median. The proposed \emph{HOMER}, short for Huber-of-Means for Efficient and Robust Estimation, retains the blockwise architecture of MOM. It replaces the outer geometric median with a radial Huber center of the block means. Given block means $Z_1,\ldots,Z_k$ in a real Hilbert space $\Hcal$, HOMER solves
\begin{equation*}
    \hat\mu_\tau \in \argmin_{\theta\in\Hcal}
    \frac1k\sum_{j=1}^k \rho_\tau\{\norm{Z_j-\theta}\},
\end{equation*}
where $\rho_\tau(t)\sim t^2/2$ near zero and grows linearly in the tails. HOMER is median-like for distant summaries and mean-like for central ones. Figure~\ref{fig:homer-overview} summarizes the construction.

\begin{figure*}[t]
\centering
\includegraphics[width=\textwidth]{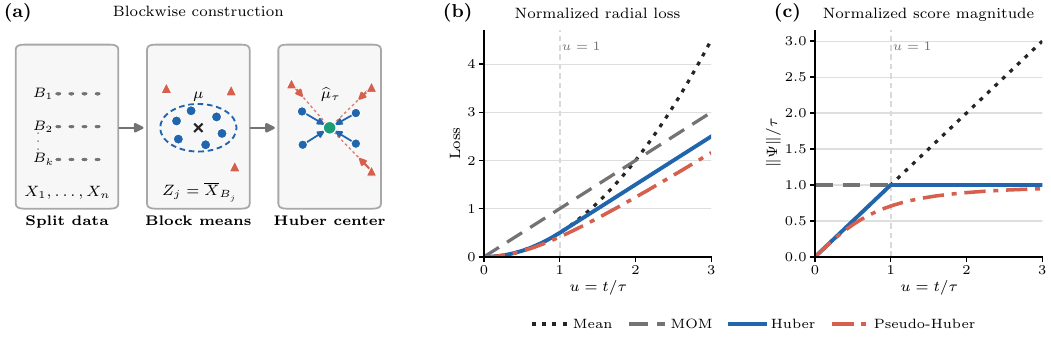}
\caption{\textbf{HOMER at a glance.}
(a) The sample is split into blocks whose means are combined by a radial Huber center. The center balances block scores, and distant contributions are bounded. (b) Normalized radial losses for $u=t/\tau$. (c) Normalized score magnitudes. Rescaling the MOM objective by $\tau$ preserves its minimizer and sets its normalized score to one.}
\label{fig:homer-overview}
\end{figure*}

HOMER addresses robustness, efficiency, and inference jointly. It does not uniformly dominate MOM, which is the median-like endpoint. The Huber threshold provides a controlled path toward mean efficiency and smooth inference.

\paragraph{Contributions.}
Our results address three parts of this tradeoff:
\begin{enumerate}[leftmargin=*,topsep=2pt,itemsep=2pt]
    \item \textbf{Robustness.} We define canonical and pseudo-Huber aggregation for Hilbert-valued block means. The analysis establishes well-posedness, bounded block scores, and a shared two-radius majority theorem. Chebyshev and Hoeffding inequalities then give a MOM-order high-confidence bound under only a finite second moment. This bound does not improve the deviation order of geometric MOM.
    \item \textbf{Efficiency.} We identify the MOM and sample-mean endpoints. Canonical Huber exactly recovers the sample mean in its quadratic basin. Pseudo-Huber converges to the sample mean as its threshold grows. Under a finite third moment, its population sandwich covariance also approaches the sample-mean covariance.
    \item \textbf{Inference.} For smooth pseudo-Huber aggregation, we prove Hilbert-space asymptotic linearity. A plug-in sandwich covariance is also consistent in trace norm. Under a finite third moment, the standardized block-Huber bias is $O(m^{-1/2})$. The sandwich operators converge to Gaussian-block limits. If $m,k\to\infty$ and $k=o(m)$, fixed finite-dimensional projections permit root-$n$ inference for the mean.
\end{enumerate}
The same block summaries support an exact majorization-minimization algorithm. They also support applications to functional means, covariance operators, kernel embeddings, and distributed summaries. The appendix contains proofs, general constants, expanded protocols, and additional examples.

%----%----%----%----%----%----%----%----%----%----%----%----%----%

\section{RELATED WORK}

\paragraph{Robust mean estimation and Huber centers.}
Scalar MOM and its metric-space generalizations combine weak block estimators into high-confidence estimates \citep{nemirovskij_1983_ProblemComplexityMethod,jerrum_1986_RandomGenerationCombinatorial,alon_1999_SpaceComplexityApproximating,hsu_2016_LossMinimizationParameter,minsker_2015_GeometricMedianRobust}. Geometric-median aggregation is the standard Hilbert-space construction \citep{minsker_2024_GeometricMedianApplications}. Scalar Huber merging appears in \citet{minsker_2019_DistributedStatisticalEstimation}. \citet{mathieu_2021_MestimationMedianMeans} calls a scalar block-mean construction HOME (``Huber Of Means Estimator''). Catoni-type truncation provides another bounded-score method for robust means \citep{catoni_2012_ChallengingEmpiricalMean,catoni_2018_DimensionfreePACBayesianBounds}. Adaptive Huber methods examine related threshold tradeoffs in regression \citep{sun_2020_AdaptiveHuberRegression,wang_2021_NewPrincipleTuningFree}.

\paragraph{Applications and inference.}
MONK applies MOM to kernel mean embeddings (KMEs) and maximum mean discrepancy (MMD) \citep{lerasle_2019_MONKOutlierRobustMean}. HOMER adds threshold-controlled efficiency and projected sandwich inference. Geometric medians already have Hilbert-space asymptotics, confidence regions, and robust covariance and principal component analysis (PCA) theory \citep{cardot_2013_EfficientFastEstimation,cardot_2017_OnlineEstimationGeometric,cardot_2017_FastEstimationMedian}. Pseudo-Huber has a specific advantage here. Its regular Hessian is defined everywhere, and the same block summaries yield a direct plug-in sandwich estimate. The MM update resembles Weiszfeld and IRLS algorithms \citep{beck_2015_WeiszfeldsMethodOld}.

%\subsubsection*{Acknowledgements}
%All acknowledgments go at the end of the paper, including thanks to reviewers who gave useful comments, to colleagues who contributed to the ideas, and to funding agencies and corporate sponsors that provided financial support.  To preserve the anonymity, please include acknowledgments \emph{only} in the camera-ready papers. The acknowledgements do not count against the 9-page page limit in the camera-ready.

%----%----%----%----%----%----%----%----%----%----%----%----%----%
\section{HOMER IN HILBERT SPACE}

Let $X_1,\ldots,X_n$ be iid strongly measurable elements of a real separable Hilbert space $(\Hcal,\ip{\cdot}{\cdot},\norm{\cdot})$. Assume $\E\norm X^2<\infty$, set $\mu=\E X$, and write
\begin{equation}\label{eq:finite-second}
    \sigma^2:=\E\norm{X-\mu}^2<\infty.
\end{equation}
For positive integers $k,m$ with $n=km$, split the sample into $k$ disjoint blocks $B_1,\ldots,B_k$ of size $m$. Define
\begin{equation*}
    Z_j=\frac1m\sum_{i\in B_j}X_i,\qquad j=1,\ldots,k.
\end{equation*}
While blocks may be of different cardinality, we use equal block sizes only to simplify notation without loss of generality.

For $\tau>0$, we use two standard losses within the HOMER framework \citep{huber_1964_RobustEstimationLocation,huber_1981_RobustStatistics}. The canonical Huber loss is
\begin{equation}\label{eq:huber-loss}
    h_\tau(t)=
    \begin{cases}
        t^2/2, & 0\le t\le \tau,\\
        \tau t-\tau^2/2, & t>\tau,
    \end{cases}
\end{equation}
and the smooth pseudo-Huber loss is
\begin{equation}\label{eq:pseudo-huber}
    p_\tau(t)=\tau^2\left(\sqrt{1+t^2/\tau^2}-1\right).
\end{equation}
For $\rho_\tau\in\{h_\tau,p_\tau\}$, HOMER is
\begin{equation*}
    \hat\mu_\tau\in\argmin_{\theta\in\Hcal}
    F_\tau(\theta),\qquad
    F_\tau(\theta)=\frac1k\sum_{j=1}^k
    \rho_\tau\{\norm{Z_j-\theta}\}.
\end{equation*}
For the canonical loss, $\hat\mu_\tau$ denotes any Borel-measurable minimizer. A measurable selection exists by parameterizing the finite convex hull with the simplex. The measurable maximum theorem then applies. Every deterministic result below holds for all minimizers, so the selection is immaterial. Both losses are convex, strictly increasing away from zero, and $\tau$-Lipschitz. They also satisfy the scale identity
$\rho_\tau(t)=\tau^2\rho_1(t/\tau)$.

\subsection{Well-posedness and bounded block scores}

Define the radial score maps
\begin{align}
    \Psi^{\mathrm H}_\tau(y)
    &=\begin{cases}
        y, & \norm y\le\tau,\\
        \tau y/\norm y, & \norm y>\tau,
      \end{cases}\label{eq:canonical-score}\\
    \Psi^{\mathrm P}_\tau(y)
    &=\frac{y}{\sqrt{1+\norm y^2/\tau^2}}.\label{eq:pseudo-score}
\end{align}
Both score maps are odd. Therefore, the parameter-minus-data equation below is equivalent to the data-minus-parameter orientation in the inference section.

\begin{proposition}[Existence, convex hull, and score equation]\label{prop:existence}
For either loss in \eqref{eq:huber-loss} and \eqref{eq:pseudo-huber}, $F_\tau$ has a minimizer. Every minimizer belongs to the compact convex hull of $\{Z_1,\ldots,Z_k\}$. The pseudo-Huber objective is strictly convex and therefore has a unique minimizer. Every HOMER solution satisfies
\begin{equation}\label{eq:score}
    \frac1k\sum_{j=1}^k
    \Psi_\tau(\hat\mu_\tau-Z_j)=0,
\end{equation}
where $\Psi_\tau$ is \eqref{eq:canonical-score} or \eqref{eq:pseudo-score}. In both cases,
\begin{equation}\label{eq:bounded-score}
    \norm{\Psi_\tau(y)}\le\tau\qquad\text{for all }y\in\Hcal.
\end{equation}
\end{proposition}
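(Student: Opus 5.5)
The plan is to reduce everything to the finite-dimensional subspace $V=\mathrm{span}\{Z_1,\ldots,Z_k\}$ and use a projection argument. Let $C=\mathrm{conv}\{Z_1,\ldots,Z_k\}$, which is compact and convex because it is the image of the simplex under a continuous map. The key step is to show that projecting onto $C$ never increases the objective. Let $P_C$ be the metric projection onto $C$. For any $\theta\in\Hcal$ and any $j$, the obtuse-angle characterization $\ip{\theta-P_C\theta}{Z_j-P_C\theta}\le 0$ gives
\[
\norm{Z_j-\theta}^2=\norm{Z_j-P_C\theta}^2+\norm{\theta-P_C\theta}^2-2\ip{\theta-P_C\theta}{Z_j-P_C\theta}\ge \norm{Z_j-P_C\theta}^2+\norm{\theta-P_C\theta}^2 .
\]
Thus, if $\theta\notin C$, then $\norm{Z_j-\theta}>\norm{Z_j-P_C\theta}$ for every $j$. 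Both losses are strictly increasing on $[0,\infty)$, so $F_\tau(\theta)>F_\tau(P_C\theta)$. It follows that every minimizer lies in $C$. Existence then follows from continuity of $F_\tau$, since the losses are Lipschitz and the norm is continuous, together with compactness of $C$. The minimum of $F_\tau$ over $C$ is therefore the global minimum.

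For strict convexity of the pseudo-Huber objective, I would write $p_\tau(\norm{y})=\tau\sqrt{\tau^2+\norm y^2}-\tau^2=\tau\norm{(\tau,y)}_{\R\times\Hcal}-\tau^2$. This is a norm on the Hilbert space $\R\times\Hcal$, and such a norm is strictly convex along segments whose endpoints are not positively proportional. Two vectors $(\tau,y_1)$ and $(\tau,y_2)$ have the same first coordinate $\tau>0$, so they are proportional only when $y_1=y_2$. Hence $\theta\mapsto p_\tau(\norm{Z_1-\theta})$ is already strictly convex, and so is $F_\tau$. This yields uniqueness. I expect this step to be the main point needing care, because the Hilbert-space setting excludes a naive Hessian argument. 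The strict triangle inequality in an inner-product space handles it cleanly. As an alternative, one can check strict convexity of $t\mapsto p_\tau$ composed with the norm restricted to a line.

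For the score equation, I would compute Fréchet derivatives. The function $y\mapsto p_\tau(\norm y)$ is smooth with gradient $y/\sqrt{1+\norm y^2/\tau^2}=\Psi^{\mathrm P}_\tau(y)$. The function $y\mapsto h_\tau(\norm y)$ is also Fréchet differentiable everywhere, including at $0$ and on the sphere $\norm y=\tau$, because $h_\tau'(t)$ is continuous and equals $t$ near $0$. Its gradient is $h_\tau'(\norm y)\,y/\norm y=\Psi^{\mathrm H}_\tau(y)$. By the chain rule, the gradient of $\theta\mapsto\rho_\tau(\norm{Z_j-\theta})$ is $\Psi_\tau(\theta-Z_j)$. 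First-order optimality at a minimizer of a differentiable convex function then gives \eqref{eq:score}. Finally, \eqref{eq:bounded-score} is immediate. In the canonical case, $\norm{\Psi^{\mathrm H}_\tau(y)}=\min(\norm y,\tau)$. In the pseudo-Huber case, $\norm{\Psi^{\mathrm P}_\tau(y)}=\tau\norm y/\sqrt{\tau^2+\norm y^2}<\tau$.
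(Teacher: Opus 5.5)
Your proposal is correct. The existence, convex-hull, score-equation and bounded-score parts match the paper's proof essentially step for step: the projection inequality combined with strict monotonicity of both radial losses, compactness of $C$, Fr\'echet gradients equal to $\Psi_\tau$, and the first-order condition. The paper's proof also includes a measurable-selection remark, but that is not part of the stated proposition, so you lose nothing by omitting it.

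The one genuinely different step is uniqueness. You write $p_\tau(\norm y)=\tau\norm{(\tau,y)}_{\R\times\Hcal}-\tau^2$ and use the strict triangle inequality of a Hilbert norm. The paper instead uses the everywhere positive-definite Hessian \eqref{eq:pseudo-hessian} (with $\lambda=\tau$), whose quadratic form is at least $a(y)^3\norm v^2$.

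Your lifting is derivative-free. It shows that pseudo-Huber is an affine image of a norm, so strict convexity is immediate. The paper's route instead gives a quantitative curvature floor $a(y)^3$. That floor is reused later, in the $O(\tau^{-2})$ bound of Proposition~\ref{prop:endpoints}(iii) and in the Hessian floors of the inference proofs. Qualitative strict convexity alone does not supply it.

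Your remark that the Hilbert setting ``excludes a naive Hessian argument'' is not accurate. Restrict to a line: $t\mapsto p_\tau(\norm{y+tv})$ has second derivative $\ip{v}{H_\tau(y+tv)v}\ge a(y+tv)^3\norm v^2>0$ for $v\neq0$. This is exactly the paper's argument, and it works in any Hilbert space.
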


Equation \eqref{eq:bounded-score} gives a bounded \emph{block score}. If the population Hessian is boundedly invertible, the local influence function is also bounded. This second conclusion requires local identification.

\subsection{Endpoint behavior}

The threshold controls how HOMER moves between the geometric median and the sample mean. The following proposition makes these connections precise.

\begin{proposition}[Connections to MOM and the sample mean]\label{prop:endpoints}
Let $\bar Z=k^{-1}\sum_{j=1}^k Z_j$.
\begin{enumerate}[label=(\roman*),leftmargin=*]
    \item As $\tau\downarrow0$, the rescaled objectives $F_\tau/\tau$ converge uniformly on the block means' convex hull to the geometric-median objective. Hence every cluster point of HOMER solutions is a MOM solution.
    \item For the canonical loss, suppose $\max_j\norm{Z_j-\bar Z}\le\tau$. Then $\bar Z$ is a HOMER minimizer. If all inequalities are strict, the minimizer is unique and equals $\bar Z$.
    \item For the pseudo-Huber loss, the unique minimizer converges to $\bar Z$ as $\tau\to\infty$. Its error satisfies $\norm{\hat\mu_\tau-\bar Z}=O(\tau^{-2})$. Appendix~\ref{app:proof-basic} gives an explicit finite-sample bound.
\end{enumerate}
For equal-sized blocks, $\bar Z=n^{-1}\sum_{i=1}^nX_i$.
\end{proposition}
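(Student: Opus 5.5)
I would handle the three parts separately, relying throughout on Proposition~\ref{prop:existence}. That result places every minimizer in the compact convex hull $K=\mathrm{conv}\{Z_1,\ldots,Z_k\}$, and it supplies the score equation \eqref{eq:score}.

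For (i), I would use the scale identity $\rho_\tau(t)=\tau^2\rho_1(t/\tau)$. This gives $\rho_\tau(t)/\tau=\tau\rho_1(t/\tau)$, which should be compared with $t$. For the canonical loss, $|t-h_\tau(t)/\tau|\le\tau/2$ for all $t\ge0$: on $[0,\tau]$ the difference is $t-t^2/(2\tau)\le\tau/2$, and beyond $\tau$ it equals $\tau/2$ exactly. For pseudo-Huber, $t-\tau(\sqrt{1+t^2/\tau^2}-1)=t+\tau-\sqrt{\tau^2+t^2}\in[0,\tau]$. In both cases $\sup_{\theta\in\Hcal}|F_\tau(\theta)/\tau-G(\theta)|\le\tau$, where $G(\theta)=k^{-1}\sum_j\norm{Z_j-\theta}$. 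The bound is in fact uniform on all of $\Hcal$, not only on $K$.

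Next I would take $\tau_\ell\downarrow0$ and minimizers $\hat\mu_{\tau_\ell}\in K$. Compactness of $K$ gives a convergent subsequence with limit $\theta^*\in K$. For any $\theta$,
\[
G(\theta^*)=\lim G(\hat\mu_{\tau_\ell})\le\liminf\bigl(F_{\tau_\ell}(\hat\mu_{\tau_\ell})/\tau_\ell+\tau_\ell\bigr)\le\liminf\bigl(F_{\tau_\ell}(\theta)/\tau_\ell+\tau_\ell\bigr)=G(\theta).
\]
The first equality uses continuity of $G$. Hence $\theta^*$ is a geometric median of the block means, i.e.\ a MOM solution.

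For (ii), I would use convexity of $F_\tau$. At $\theta=\bar Z$ every residual satisfies $\norm{Z_j-\bar Z}\le\tau$, so each term lies in the region where $h_\tau(t)=t^2/2$. The squared-norm function $\theta\mapsto\tfrac12\norm{Z_j-\theta}^2$ majorizes $h_\tau(\norm{Z_j-\theta})$ globally, because $h_\tau(t)\le t^2/2$, and the two agree at $\bar Z$. Instead of that majorization, the cleanest route is first-order: $h_\tau\circ\norm{\cdot}$ is Fréchet differentiable with gradient $\Psi^{\mathrm H}_\tau$, and $\Psi^{\mathrm H}_\tau(\bar Z-Z_j)=\bar Z-Z_j$ when $\norm{\bar Z-Z_j}\le\tau$. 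These terms average to zero, so the gradient of the convex function $F_\tau$ vanishes at $\bar Z$, and $\bar Z$ is a minimizer.

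For uniqueness under strict inequalities, continuity gives a neighborhood $U$ of $\bar Z$ on which all residuals are below $\tau$. On $U$, $F_\tau(\theta)=\frac1{2k}\sum_j\norm{Z_j-\theta}^2$, which is strictly convex with unique minimizer $\bar Z$. Suppose another minimizer $\theta'$ existed. The minimizer set of a convex function is convex, so the segment $[\bar Z,\theta']$ would consist of minimizers, and its part inside $U$ would contradict strict convexity there. So the minimizer is unique and equals $\bar Z$.

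For (iii), I would rewrite \eqref{eq:score} for pseudo-Huber as a weighted mean. Set $w_j=(1+\norm{\hat\mu-Z_j}^2/\tau^2)^{-1/2}$. Then $\hat\mu=\sum_jw_jZ_j/\sum_jw_j$, and
\[
\hat\mu-\bar Z=\frac{\sum_j(w_j-\bar w)(Z_j-\bar Z)}{\sum_jw_j},\qquad \bar w=k^{-1}\sum_jw_j.
\]
Since $\hat\mu\in K$, we have $\norm{\hat\mu-Z_j}\le D:=\mathrm{diam}(K)$. Using $1-(1+x)^{-1/2}\le x/2$ gives $0\le1-w_j\le D^2/(2\tau^2)$, so $|w_j-\bar w|\le D^2/(2\tau^2)$. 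Also $\sum_jw_j\ge k(1+D^2/\tau^2)^{-1/2}$. Combining these yields the explicit bound
\[
\norm{\hat\mu_\tau-\bar Z}\le\frac{D^2}{2\tau^2}\sqrt{1+D^2/\tau^2}\,\max_j\norm{Z_j-\bar Z}=O(\tau^{-2}).
\]

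The final remark is immediate: with equal block sizes, $\bar Z=k^{-1}\sum_j m^{-1}\sum_{i\in B_j}X_i=n^{-1}\sum_iX_i$.

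The step needing the most care is the uniqueness claim in (ii). Because $h_\tau$ is not strictly convex in its linear part, the argument has to be localized to the quadratic neighborhood, as above.
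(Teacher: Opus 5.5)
Your proof is correct. Parts (i) and (ii) follow the paper's argument. For (i), both use the uniform bounds $\tau/2$ and $\tau$ on $|t-\rho_\tau(t)/\tau|$ followed by argmin continuity, which you spell out via compactness of the hull. For (ii), both use the vanishing gradient at $\bar Z$, and uniqueness comes from local strict convexity contradicting a flat segment of minimizers.

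Part (iii) takes a genuinely different route.
\begin{itemize}
\item The paper proves convergence separately, using uniform convergence of $p_\tau$ to $t^2/2$ on the bounded residual range. It gets the rate from strong convexity. It bounds the Hessian below by $(1+D_C^2/\tau^2)^{-3/2}$ on the hull. It bounds $\norm{\nabla F_\tau(\bar Z)}\le D_C^3/(2\tau^2)$ using $\sum_j(\bar Z-Z_j)=0$. It then transfers this to $\hat\mu_\tau$ by strong gradient monotonicity, obtaining $\frac{D_C^3}{2\tau^2}(1+D_C^2/\tau^2)^{3/2}$.
\item You instead evaluate the weights at $\hat\mu_\tau$ itself and write the score equation as a weighted mean. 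You then subtract $\bar w\sum_j(Z_j-\bar Z)=0$. This needs no curvature information, only a lower bound on the total weight. Convergence then falls out of the rate, with no separate argmin-continuity step.
\end{itemize}
Your constant is also sharper: $\frac{D^2}{2\tau^2}(1+D^2/\tau^2)^{1/2}\max_j\norm{Z_j-\bar Z}$. The exponent is $1/2$ instead of $3/2$, and $\max_j\norm{Z_j-\bar Z}\le D$ replaces one factor of $D_C$.

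The paper's version follows the generic ``score bound plus Hessian floor'' template, which it reuses in the inference appendix (Lemma~\ref{lem:population-floor} and the proof of Theorem~\ref{thm:fixed-m-inference}). It applies to any smooth loss with a curvature floor. Your version instead exploits the weighted-mean fixed-point structure that also underlies the MM update \eqref{eq:mm-update}. That structure is more elementary here and gives the tighter explicit bound.
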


Thus the canonical loss has an exact finite-sample quadratic regime, while pseudo-Huber has a smooth asymptotic quadratic endpoint.
%----%----%----%----%----%----%----%----%----%----%----%----%----%

\section{MAJORITY ROBUSTNESS AND HIGH-CONFIDENCE BOUNDS}

The main Hilbert-space fact is an angle bound. Suppose $D=\norm{\theta-\mu}>r$, $v=(\theta-\mu)/D$, and $z\in B(\mu,r)$. Then
\begin{equation}\label{eq:angle-bound}
    \left\langle v,\frac{\theta-z}{\norm{\theta-z}}\right\rangle
    \ge \sqrt{1-r^2/D^2}.
\end{equation}
The least favorable direction is tangent to the radius-$r$ ball. This inequality compares each good block's outward score with an arbitrary bad block's maximum inward score.

\begin{theorem}[Two-radius majority theorem]\label{thm:deterministic}
Suppose a set $\mathcal G\subset\{1,\ldots,k\}$ satisfies
\begin{equation*}
    |\mathcal G|\ge \frac{5k}{8},
    \qquad
    \norm{Z_j-\mu}\le r\quad\text{for all }j\in\mathcal G.
\end{equation*}
For either loss, suppose $0<\tau\le r$. Then every HOMER solution satisfies
\begin{equation*}
    \norm{\hat\mu_\tau-\mu}\le2r.
\end{equation*}
\end{theorem}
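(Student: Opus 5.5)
We argue by contradiction through the score equation \eqref{eq:score} of Proposition~\ref{prop:existence}. Suppose some HOMER solution $\theta=\hat\mu_\tau$ satisfies $D:=\norm{\theta-\mu}>2r$, and set $v=(\theta-\mu)/D$. Pair \eqref{eq:score} with $v$:
\begin{equation*}
    0=\sum_{j=1}^k\ip{v}{\Psi_\tau(\theta-Z_j)}
    =\sum_{j\in\mathcal G}\ip{v}{\Psi_\tau(\theta-Z_j)}+\sum_{j\notin\mathcal G}\ip{v}{\Psi_\tau(\theta-Z_j)}.
\end{equation*}
For $j\notin\mathcal G$ we only use the bounded score \eqref{eq:bounded-score}, so each term is at least $-\tau$. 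The goal is a uniform constant $c>3/5$ with $\ip{v}{\Psi_\tau(\theta-Z_j)}\ge c\tau$ for every good block. Then the sum is at least $\tau\{c|\mathcal G|-(k-|\mathcal G|)\}\ge \tau k(5c/8-3/8)>0$, which is the desired contradiction.

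For a good block, write $y=\theta-Z_j$. Both score maps are radial, $\Psi_\tau(y)=g_\tau(\norm y)\,y/\norm y$ with $g_\tau\ge0$. Hence
\begin{equation*}
\ip{v}{\Psi_\tau(y)}=g_\tau(\norm y)\,\ip{v}{y/\norm y}.
\end{equation*}
The angle bound \eqref{eq:angle-bound} applies because $Z_j\in B(\mu,r)$ and $D>r$. With $D>2r$ it gives $\ip{v}{y/\norm y}\ge\sqrt{1-r^2/D^2}>\sqrt3/2$. For the magnitude, the triangle inequality gives $\norm y\ge D-r>r\ge\tau$. For the canonical loss this puts $y$ in the linear regime, so $g_\tau(\norm y)=\tau$ and we may take $c=\sqrt3/2$. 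For pseudo-Huber, $g_\tau(t)=t/\sqrt{1+t^2/\tau^2}$ is increasing in $t$. Since $t>\tau$, we get $g_\tau(t)>\tau/\sqrt2$, so we may take $c=\sqrt3/(2\sqrt2)=\sqrt6/4\approx0.612$.

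The only delicate step is this pseudo-Huber constant. The good-block score is then only about $0.61\tau$, against a worst-case inward pull of $\tau$ per bad block. The required inequality $5c/8>3/8$, i.e.\ $c>0.6$, holds but with little room. This is exactly why the hypotheses combine the two radii ($D>2r$ for the angle and $\tau\le r$ for the magnitude) with the $5/8$ majority fraction. The inequality is strict because $D>2r$ makes the angle bound strict, so the sum is strictly positive and cannot vanish. The argument uses only the score equation, which every minimizer satisfies by Proposition~\ref{prop:existence}. It therefore covers all minimizers, including non-unique canonical ones, and yields $\norm{\hat\mu_\tau-\mu}\le2r$.
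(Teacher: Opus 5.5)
Your proof is correct and follows the paper's argument essentially line for line: you project the score equation onto $v$ and use the angle lemma (cosine $>\sqrt3/2$) together with $\norm{\hat\mu_\tau-Z_j}>r\ge\tau$, which gives each good block a projected score of at least $\tau\sqrt{3/8}$ for either loss; each bad block is bounded by $-\tau$, and the contradiction follows from $\frac58\sqrt{3/8}>\frac38$. The only cosmetic difference is that you keep the sharper canonical constant $\sqrt3/2$ separate, whereas the paper uses the common bound $\sqrt{3/8}$ for both losses.
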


\paragraph{Constants and proof sketch.}
For a HOMER solution farther than $2r$ from $\mu$, \eqref{eq:angle-bound} makes every good block's projected score exceed $\tau\sqrt{3/8}$. Each bad block contributes at worst $-\tau$, so the assumed majority makes the projected score positive and contradicts \eqref{eq:score}. The pair $(5/8,2r)$ is therefore a convenient common choice for both losses, not an optimality claim. Appendix~\ref{app:proof-majority} gives the full proof. Appendix~\ref{app:general-majority} gives sharper loss-specific fraction thresholds and radii, including bounds for every contamination fraction below $1/2$. It also verifies the canonical $\tau\downarrow0$ limit, which equals the Hilbert geometric-median constant of \citet[Lemma~2.1]{minsker_2015_GeometricMedianRobust}.

The probabilistic result follows by applying the majority theorem to independent block means.

\begin{theorem}[High-confidence Hilbert-norm bound]\label{thm:high-prob}
Assume \eqref{eq:finite-second} and $\sigma>0$. For either loss, take
\[
    0<\tau\le 2\frac{\sigma}{\sqrt m}.
\]
Then
\begin{equation*}
    \Pp\left\{
    \norm{\hat\mu_\tau-\mu}>4\frac{\sigma}{\sqrt m}
    \right\}
    \le e^{-k/32}.
\end{equation*}
For $0<\delta<1$, take $k=\lceil32\log(1/\delta)\rceil$. Suppose $k\le n$ and $k$ divides $n$. Then, with probability at least $1-\delta$,
\begin{equation}\label{eq:delta-bound}
    \norm{\hat\mu_\tau-\mu}
    \le4\sigma\sqrt{\frac{k}{n}}
    \lesssim \sigma\sqrt{\frac{\log(e/\delta)}{n}}.
\end{equation}
When $\sigma=0$, all observations equal $\mu$ almost surely and the conclusion is trivial.
\end{theorem}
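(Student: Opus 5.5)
The plan is to reduce the statement to Theorem~\ref{thm:deterministic} with radius $r=2\sigma/\sqrt m$. With this choice, the hypothesis $0<\tau\le 2\sigma/\sqrt m$ is exactly $\tau\le r$, and the conclusion $\norm{\hat\mu_\tau-\mu}\le 2r=4\sigma/\sqrt m$ is the desired bound. It then suffices to show that, with probability at least $1-e^{-k/32}$, at least $5k/8$ of the block means satisfy $\norm{Z_j-\mu}\le r$. On that event the deterministic theorem applies to every HOMER solution, including any measurable selection in the canonical case.

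\textbf{Step 1 (weak accuracy of one block).} Since the $X_i$ are iid in a Hilbert space, cross terms vanish: $\E\ip{X_i-\mu}{X_l-\mu}=0$ for $i\ne l$. Hence $\E\norm{Z_j-\mu}^2=\sigma^2/m$. Chebyshev's (Markov's) inequality then gives
\[
p:=\Pp\{\norm{Z_j-\mu}>r\}\le \frac{\sigma^2/m}{4\sigma^2/m}=\frac14 .
\]

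\textbf{Step 2 (concentration of the bad count).} Let $\xi_j=\ind\{\norm{Z_j-\mu}>r\}$. These indicators are independent because the blocks are disjoint, and each has mean at most $1/4$. The good set fails to have size at least $5k/8$ only if $\sum_j\xi_j>3k/8$, that is, only if $\frac1k\sum_j\xi_j-p>3/8-1/4=1/8$. Hoeffding's inequality bounds this probability by $\exp(-2k(1/8)^2)=e^{-k/32}$. On the complementary event, $\mathcal G=\{j:\xi_j=0\}$ satisfies the hypotheses of Theorem~\ref{thm:deterministic}, which gives the first display.

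\textbf{Step 3 (the $\delta$ form).} Take $k=\lceil 32\log(1/\delta)\rceil$. Then $e^{-k/32}\le\delta$, and since $m=n/k$ we have $4\sigma/\sqrt m=4\sigma\sqrt{k/n}$. Finally, $k\le 32\log(1/\delta)+1\lesssim\log(e/\delta)$, which gives \eqref{eq:delta-bound}. For the case $\sigma=0$, we have $X=\mu$ almost surely, so every $Z_j=\mu$. By Proposition~\ref{prop:existence} every minimizer lies in the convex hull $\{\mu\}$, so the bound holds trivially.

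No step is really hard, given the majority theorem. The only care needed is to check that the Hoeffding margin $1/8$ exactly produces the constant $32$, and that the tie case $\norm{Z_j-\mu}=r$ is correctly counted as good. It is, because Theorem~\ref{thm:deterministic} uses a non-strict inequality.
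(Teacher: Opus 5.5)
Your proposal is correct and follows the paper's proof: Chebyshev at radius $r=2\sigma/\sqrt m$ bounds each block's failure probability by $1/4$, Hoeffding with margin $1/8$ gives $e^{-k/32}$, and Theorem~\ref{thm:deterministic} turns the $5k/8$ good-block event into the $2r=4\sigma/\sqrt m$ bound. Your $\delta$-form calculation and your use of the convex-hull property for $\sigma=0$ correctly fill in details the paper leaves implicit.
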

If $k\nmid n$, one may discard at most $k-1$ observations. Alternatively, block sizes may differ by at most one. Only the displayed constants change, according to the smallest block size. The deterministic event controls every canonical-Huber minimizer. Therefore, any measurable selection obeys the same probability bound.

In $\R^d$, $\sigma^2=\tr(\Sigma)$. Thus \eqref{eq:delta-bound} gives a trace-type Hilbert-norm result similar to geometric MOM. It is not anisotropically optimal. General bounds tolerate any corrupted-summary fraction below $1/2$. The two-radius theorem uses $3/8$. Neither result implies robustness to one adversarial point in every block.
%----%----%----%----%----%----%----%----%----%----%----%----%----%

\section{APPLICATIONS IN HILBERT SPACES}\label{sec:applications}

Many targets are Hilbert-valued means. For each rate below, take $0<\delta<1$, $k=\lceil32\log(1/\delta)\rceil\le n$, $k\mid n$, and equal blocks. Set $0<\tau\le2s/\sqrt m$, where $s^2$ is the relevant Hilbert variance. If $s=0$, the estimator is exact for any $\tau>0$. Each bound holds with probability at least $1-\delta$. Unequal blocks use the smallest-block correction.

\paragraph{Functional mean curves.}
For strongly measurable $X\in L^2[0,1]$, $\mu=\E X$ is the mean trajectory as an $L^2$ equivalence class. Suitable representatives satisfy $\mu(t)=\E X(t)$ almost everywhere. Here $s^2=\E\norm{X-\mu}_{L^2}^2$, and the theorem gives
\[
    \norm{\hat\mu_\tau-\mu}_{L^2}
    \lesssim
    \{\E\norm{X-\mu}_{L^2}^2\}^{1/2}
    \sqrt{\frac{\log(e/\delta)}{n}}.
\]
For discretized curves, this is a bound in the chosen grid norm. A continuous $L^2$ claim also requires discretization control. Daily demand curves in the additional studies may be dependent. The iid theory therefore serves only as a benchmark for that analysis. Raw-observation Huber centers are studied by \citet{sinova_2018_MestimatorsLocationFunctional}.

\paragraph{Covariance operators and robust PCA.}
For the direct corollary, let $X\in\Hcal$ have known mean $\mu$, and let
\[
    Y=(X-\mu)\otimes (X-\mu)
\]
be the rank-one covariance operator. With sample splitting, condition on an independent center $\tilde\mu$. The estimation-subsample target is $\Sigma+(\tilde\mu-\mu)\otimes(\tilde\mu-\mu)$. Thus error to $\Sigma$ adds $\norm{\tilde\mu-\mu}^2$ to the conditional HOMER term. Here $n$ denotes the covariance-estimation subsample size. The Hilbert-Schmidt class $\mathcal S_2(\Hcal)$ is a Hilbert space, and $\Sigma=\E Y$ under a finite second moment. If
\[
    \nu^2=\E\norm{Y-\Sigma}_{\mathcal S_2}^2<\infty,
\]
then take $s=\nu$. Applying HOMER to $Y_i$ yields $\hat\Sigma_\tau$ satisfying
\begin{equation*}
    \norm{\hat\Sigma_\tau-\Sigma}_{\mathcal S_2}
    \lesssim
    \nu\sqrt{\frac{\log(e/\delta)}{n}}.
\end{equation*}
A finite fourth moment of $X$ is sufficient. HOMER lies in the convex hull of positive semidefinite block covariance operators, so $\hat\Sigma_\tau$ is positive semidefinite. Suppose the leading $r$-dimensional eigenspace of $\Sigma$ has gap $\Delta$, and the perturbation is below a fixed fraction of $\Delta$. Then we have 
\begin{align*}
    \norm{\hat P_r-P_r}_{\op}
    &\lesssim
    \frac{\norm{\hat\Sigma_\tau-\Sigma}_{\op}}{\Delta},\\
    \norm{\hat P_r-P_r}_{\mathcal S_2}
    &\lesssim
    \sqrt r\,\frac{\norm{\hat\Sigma_\tau-\Sigma}_{\op}}{\Delta},
\end{align*}
by the Davis-Kahan theorem \citep{davis_1970_RotationEigenvectorsPerturbation}. Since $\norm{A}_{\op}\le\norm{A}_{\mathcal S_2}$, the HOMER Hilbert-Schmidt bound controls both displays. Here $P_r$ and $\hat P_r$ denote the population and estimated leading eigenspace projectors. This result is the HOMER analogue of robust PCA based on geometric aggregation or median covariation \citep{minsker_2015_GeometricMedianRobust,cardot_2017_FastEstimationMedian}.

\paragraph{Kernel mean embeddings.}
Let $W\sim P$, and let $K$ be a measurable positive definite kernel with real separable RKHS $\mathcal K$. Assume $X=K(W,\cdot)$ is strongly measurable. Its target
\[
    \mu_P=\E K(W,\cdot)
\]
is the KME of distribution $P$. Suppose $K(w,w)\le \kappa^2$ for all $w$. Then $s_K^2:=\E\norm{X-\mu_P}_{\mathcal K}^2\le \kappa^2$. Taking $s=s_K$, HOMER gives
\begin{equation}\label{eq:kme-app-bound}
    \norm{\hat\mu_{P,\tau}-\mu_P}_{\mathcal K}
    \lesssim
    \kappa\sqrt{\frac{\log(e/\delta)}{n}}.
\end{equation}
The standard RKHS inequality is
$|\E_P f-\ip{f}{\hat\mu_{P,\tau}}_{\mathcal K}|\le \norm{f}_{\mathcal K}\norm{\hat\mu_{P,\tau}-\mu_P}_{\mathcal K}$.
Together with \eqref{eq:kme-app-bound}, it controls every function in the RKHS unit ball. HOMER kernel means can also replace empirical kernel means in robust maximum mean discrepancy calculations \citep{gretton_2012_KernelTwoSampleTest,muandet_2017_KernelMeanEmbedding}. MONK already develops MOM estimators and tests for KME and MMD \citep{lerasle_2019_MONKOutlierRobustMean}. HOMER contributes a smooth threshold path and projected sandwich theory, rather than the first robust kernel embeddings.

\paragraph{Distributed gradients and model summaries.}
At a fixed parameter, stochastic gradients and influence evaluations are often Hilbert-valued means. HOMER combines worker summaries with bounded block scores. Local influence is bounded when the population Hessian is boundedly invertible. Independent honest summaries must share a target and the stated second-moment bound. Arbitrary summaries and inaccurate honest summaries must together stay below the theorem's bad-block threshold. This is pointwise aggregation, not an iterative or Byzantine optimization theorem \citep{blanchard_2017_MachineLearningAdversaries,yin_2018_ByzantineRobustDistributedLearning}.

%----%----%----%----%----%----%----%----%----%----%----%----%----%

\section{EFFICIENCY AND INFERENCE}

The finite-sample theorem establishes robustness for the smooth pseudo-Huber loss used below. We next describe its efficiency and inferential properties.

\subsection{Efficiency path}

Proposition~\ref{prop:endpoints} gives the finite-sample endpoints. The canonical loss returns the full sample mean when all residuals lie strictly inside its quadratic region. The pseudo-Huber solution converges to the sample mean as $\tau\to\infty$. Appendix~\ref{app:large-threshold} gives the corresponding population covariance result. Consider a centered block variable with a finite third moment. Its pseudo-Huber population center is $O(\lambda^{-2})$. Its sandwich covariance converges to the ordinary mean's covariance as the standardized threshold $\lambda\to\infty$.

For robust high-confidence estimation, Theorem~\ref{thm:high-prob} suggests setting $\tau$ on the block-error scale $\sigma/\sqrt m$. Larger thresholds retain more quadratic information but weaken clipping. Thus $\tau$ sets the tradeoff between robustness and efficiency.

\subsection{A Hilbert-space sandwich theorem}

Use pseudo-Huber aggregation and write $\tau_m=\lambda/\sqrt m$ for a fixed deterministic standardized threshold $\lambda>0$. Define
\[
    Y_{m,j}=\sqrt m(Z_j-\mu),
    \qquad
    \hat u=\sqrt m(\hat\mu_{\tau_m}-\mu).
\]
The scale identity turns the HOMER objective into
\[
    u\mapsto \frac1k\sum_{j=1}^k
    p_\lambda\{\norm{Y_{m,j}-u}\}.
\]
Let $u_{m,\lambda}$ be the unique minimizer of the population objective and define
\begin{equation*}
    \theta_{m,\lambda}=\mu+\frac{u_{m,\lambda}}{\sqrt m}.
\end{equation*}
For $y\in\Hcal$, set
\begin{align*}
    \phi_\lambda(y)
    &=\frac{y}{\sqrt{1+\norm y^2/\lambda^2}},\\
    H_\lambda(y)
    &=D\phi_\lambda(y),
\end{align*}
where $D$ is the Fr\'echet derivative. At the population center, define the expected Hessian and score covariance
\begin{align*}
    A_{m,\lambda}
    &=\E H_\lambda(Y_{m,1}-u_{m,\lambda}),\\
    B_{m,\lambda}
    &=\E\big[\phi_\lambda(Y_{m,1}-u_{m,\lambda})
       \otimes\phi_\lambda(Y_{m,1}-u_{m,\lambda})\big].
\end{align*}
The bounds $\norm{\phi_\lambda}\le\lambda$ and $\norm{H_\lambda}_{\op}\le1$ justify differentiation under the expectation. Thus the population score vanishes at $u_{m,\lambda}$. Appendix~\ref{app:inference} proves that $A_{m,\lambda}$ is self-adjoint and boundedly invertible.

\begin{theorem}[Fixed-block asymptotic linearity]\label{thm:fixed-m-inference}
Fix $m$ and $\lambda>0$, and let $k\to\infty$. Under \eqref{eq:finite-second}, with $n=mk$,
\begin{align*}
    &\sqrt n\,(\hat\mu_{\lambda/\sqrt m}-\theta_{m,\lambda})\\
    &\quad=A_{m,\lambda}^{-1}\frac1{\sqrt k}
      \sum_{j=1}^k\phi_\lambda(Y_{m,j}-u_{m,\lambda})+o_p(1).
\end{align*}
Consequently,
\begin{align*}
    \sqrt n\,(\hat\mu_{\lambda/\sqrt m}-\theta_{m,\lambda})
    &\Rightarrow \mathcal N_{\Hcal}(0,V_{m,\lambda}),\\[-2mm]
    V_{m,\lambda}
    &=A_{m,\lambda}^{-1}B_{m,\lambda}A_{m,\lambda}^{-1}.
\end{align*}
If $X-\mu$ is centrally symmetric, then $u_{m,\lambda}=0$ and $\theta_{m,\lambda}=\mu$ for every $m$ and $\lambda$.
\end{theorem}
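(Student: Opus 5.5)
This is a standard Z-estimation argument in the fixed dimension-free setting. Since $m$ is fixed, the rescaled block variables $Y_{m,j}$ are iid with $\E\norm{Y_{m,1}}^2=\sigma^2<\infty$. By the scale identity, $\hat u=\sqrt m(\hat\mu_{\lambda/\sqrt m}-\mu)$ is the unique minimizer of $u\mapsto k^{-1}\sum_j p_\lambda\{\norm{Y_{m,j}-u}\}$. It therefore solves the empirical score equation $\Psi_k(\hat u):=k^{-1}\sum_j\phi_\lambda(Y_{m,j}-\hat u)=0$. Write $\Psi(u)=\E\phi_\lambda(Y_{m,1}-u)$, so that $\Psi(u_{m,\lambda})=0$, and set $\Delta=\hat u-u_{m,\lambda}$. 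Note that $\sqrt n(\hat\mu-\theta_{m,\lambda})=\sqrt k\,\Delta$. The plan has three steps: (a) show consistency $\Delta\to0$ in probability; (b) linearize the score around $u_{m,\lambda}$; (c) invert $A_{m,\lambda}$ and apply the Hilbert-space CLT.

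\emph{Consistency.} The empirical objective minus its value at $u_{m,\lambda}$ is convex in $u$. It converges pointwise almost surely to the strictly convex population objective $G(u)=\E[p_\lambda\{\norm{Y-u}\}-p_\lambda\{\norm{Y-u_{m,\lambda}}\}]$ by the strong law, and the integrand is $\lambda\norm{u-u_{m,\lambda}}$-bounded. In infinite dimensions, convexity arguments on spheres are delicate because spheres are not compact. I would instead use the score directly. Monotonicity of $\phi_\lambda$ and the identity $\Psi_k(\hat u)=0$ give
\[
\langle\Psi_k(u_{m,\lambda}),\Delta\rangle=\langle\Psi_k(u_{m,\lambda})-\Psi_k(\hat u),\Delta\rangle\ge \frac1k\sum_j c(\norm{Y_j-u_{m,\lambda}},\norm{\Delta})\norm{\Delta}^2 .
\]
Here $c>0$ is a lower bound for the radial Hessian along the segment. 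Pseudo-Huber gives $H_\lambda(y)\succeq(1+\norm y^2/\lambda^2)^{-3/2}I$, and along the segment $\norm{y}\le\norm{Y_j-u_{m,\lambda}}+\norm\Delta$. This lower curvature bound is monotone in $\norm\Delta$. Hence, for $\norm\Delta\le1$, the average curvature is at least $\frac1k\sum_j(1+(\norm{Y_j-u}+1)^2/\lambda^2)^{-3/2}$, which tends to a positive constant. For $\norm\Delta>1$ the same argument applies to the rescaled point $u_{m,\lambda}+\Delta/\norm\Delta$ by convexity. Since $\norm{\Psi_k(u_{m,\lambda})}=O_p(k^{-1/2})$ by the Hilbert CLT (the score is bounded and centered), this yields $\norm\Delta=O_p(k^{-1/2})$ directly. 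That is stronger than consistency.

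\emph{Linearization.} I would write
\[
0=\Psi_k(\hat u)=\Psi_k(u_{m,\lambda})-\Big[\frac1k\sum_j\int_0^1H_\lambda(Y_j-u_{m,\lambda}-t\Delta)\,dt\Big]\Delta .
\]
The bracketed operator $\hat A$ satisfies $\norm{\hat A-A_{m,\lambda}}_{\op}\to0$ in probability, which splits into two parts. First, the law of large numbers in the Banach space of bounded operators would fail in general, but the summands are bounded by $1$ in operator norm. I would therefore use the Hilbert-Schmidt structure: $H_\lambda(y)$ equals $(1+\norm y^2/\lambda^2)^{-1/2}I$ minus a rank-one term, so the identity part is a scalar LLN and the rank-one part lies in $\mathcal S_2$ with bounded norm, where the LLN holds. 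Second, $H_\lambda$ is Lipschitz in operator norm, since its derivative is bounded by $C/\lambda$. This gives $\norm{\hat A-\frac1k\sum_jH_\lambda(Y_j-u_{m,\lambda})}_{\op}\le C\norm\Delta/\lambda\to0$. Because $A_{m,\lambda}$ is boundedly invertible (shown in the appendix), $\hat A$ is invertible with high probability and $\hat A^{-1}\to A^{-1}_{m,\lambda}$. Then $\sqrt k\Delta=A^{-1}\sqrt k\Psi_k(u_{m,\lambda})+(\hat A^{-1}-A^{-1})O_p(1)$, which is the displayed expansion. The CLT for iid bounded centered Hilbert elements with covariance $B_{m,\lambda}$, together with the continuous mapping theorem for the bounded operator $A^{-1}$, gives the Gaussian limit with covariance $V_{m,\lambda}$.

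\emph{Symmetry.} If $X-\mu\overset d=-(X-\mu)$, then $Y_{m,1}\overset d=-Y_{m,1}$, and $\phi_\lambda$ is odd. Hence $\Psi(0)=\E\phi_\lambda(Y)=0$, and uniqueness of the population minimizer (strict convexity) gives $u_{m,\lambda}=0$. The main obstacle is making the operator-norm convergence of $\hat A$ rigorous in infinite dimensions, which the identity-plus-rank-one decomposition is designed to handle.
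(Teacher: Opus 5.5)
Your proposal is correct and follows the paper's proof in all essentials. The shared steps are an $O_p(k^{-1/2})$ localization of $\hat u$ via convexity and a Hessian floor, and the integral Taylor identity with operator-norm convergence from the identity-plus-rank-one (Hilbert--Schmidt) law of large numbers and the Lipschitz bound on $H_\lambda$. The argument then finishes with the bounded Hilbert-space CLT, and with oddness of $\phi_\lambda$ plus uniqueness in the symmetric case. The only real difference is the localization step. You use the explicit segment curvature floor $(1+(\norm{Y_j-u_{m,\lambda}}+\norm{\Delta})^2/\lambda^2)^{-3/2}$ with a scalar law of large numbers, and a ray-convexity argument to rule out $\norm{\Delta}>1$. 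The paper instead obtains a floor $cI/2$ on a fixed ball $B(u_{m,\lambda},\epsilon)$ from the operator law of large numbers at $u_{m,\lambda}$ and the $6/\lambda$ Lipschitz bound, then uses positivity of the outward gradient on the boundary. Both routes are valid.
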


The same $k$ block summaries provide a consistent sandwich estimate. Put $\hat Y_j=\sqrt m(Z_j-\hat\mu_{\lambda/\sqrt m})$ and define
\begin{align*}
    \hat A_{m,\lambda}
    &=\frac1k\sum_{j=1}^kH_\lambda(\hat Y_j),\\
    \hat B_{m,\lambda}
    &=\frac1k\sum_{j=1}^k
      \phi_\lambda(\hat Y_j)\otimes\phi_\lambda(\hat Y_j),\\
    \hat V_{m,\lambda}
    &=\hat A_{m,\lambda}^{-1}\hat B_{m,\lambda}\hat A_{m,\lambda}^{-1}.
\end{align*}
For trace-class $T$, write $\norm{T}_1=\tr\{(T^*T)^{1/2}\}$. Appendix~\ref{app:inference} proves $\norm{\hat V_{m,\lambda}-V_{m,\lambda}}_1\to_p0$. A fixed deterministic contrast permits Wald inference around $\theta_{m,\lambda}$ when its covariance is nonsingular.

The target shift follows from model primitives, not an added assumption. Let $\Gamma=\E[(X-\mu)\otimes(X-\mu)]$ and $G\sim\mathcal N_{\Hcal}(0,\Gamma)$. Define
\begin{align*}
    A_\lambda^{G}&=\E H_\lambda(G),\\
    B_\lambda^{G}&=\E\{\phi_\lambda(G)\otimes\phi_\lambda(G)\},\\
    V_\lambda^{G}&=(A_\lambda^{G})^{-1}B_\lambda^{G}(A_\lambda^{G})^{-1}.
\end{align*}

\begin{proposition}[Primitive block-bias and operator limits]\label{prop:primitive-bias}
If $\beta_3:=\E\norm{X-\mu}^3<\infty$, then for every fixed $\lambda>0$,
\begin{align*}
    \norm{u_{m,\lambda}}&=O(m^{-1/2}),\\
    \norm{A_{m,\lambda}-A_\lambda^{G}}_{\op}&\to0,\\
    \norm{B_{m,\lambda}-B_\lambda^{G}}_1&\to0.
\end{align*}
The constants in the bias bound depend only on $\lambda$, $\sigma$, and $\beta_3$.
\end{proposition}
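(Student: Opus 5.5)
The plan is to compare the block variable $Y_m:=Y_{m,1}=\sqrt m(Z_1-\mu)$ with a Gaussian $G\sim\mathcal N_{\Hcal}(0,\Gamma)$ through smooth test functionals. Everything else is deterministic bookkeeping using the regularity of $\phi_\lambda$ and $H_\lambda$.

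\textbf{Smoothness and bias.} First I record derivative bounds for the pseudo-Huber score. Direct differentiation gives
\[
H_\lambda(y)=(1+\norm y^2/\lambda^2)^{-1/2}I-\lambda^{-2}(1+\norm y^2/\lambda^2)^{-3/2}\,y\otimes y .
\]
Hence $\norm{H_\lambda}_{\op}\le1$ and $\norm{D^2\phi_\lambda(y)}\le c/\lambda$ uniformly in $y$, and in fact $\norm{D^2\phi_\lambda(y)}\le c\min(1/\lambda,\lambda/\norm y^2)$. For the same reason, $H_\lambda$ is $c/\lambda$-Lipschitz in operator norm.

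The bias argument then runs as follows.
\begin{itemize}[leftmargin=*]
\item By symmetry of $G$, $\E\phi_\lambda(G)=0$.
\item The population center is characterized by $\E\phi_\lambda(Y_m-u_{m,\lambda})=0$.
\item I write $\E\phi_\lambda(Y_m-u)=\E\phi_\lambda(Y_m)-A(u)u$, where $A(u)=\int_0^1\E H_\lambda(Y_m-su)\,ds$.
\item I need $A(u)\succeq c_\lambda I$ uniformly for $u$ in a bounded set. The boundedness of $u_{m,\lambda}$ follows from the majority/angle argument at the population level, via \eqref{eq:angle-bound} and Chebyshev with $\E\norm{Y_m}^2=\sigma^2$. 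The lower bound on $A(u)$ comes from the eigenvalue formula: the smallest eigenvalue of $H_\lambda(y)$ is $(1+\norm y^2/\lambda^2)^{-3/2}$, and $\norm{Y_m-su}$ is bounded with probability at least $1/2$.
\end{itemize}
This yields
\[
\norm{u_{m,\lambda}}\le c_\lambda^{-1}\,\norm{\E\phi_\lambda(Y_m)-\E\phi_\lambda(G)}.
\]
So the bias claim reduces to a Hilbert-space Berry--Esseen estimate for the smooth test map $\phi_\lambda$.

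\textbf{Main obstacle: the smooth CLT rate.} The central step is to show that the vector-valued quantity $\norm{\E\phi_\lambda(m^{-1/2}\sum_{i\le m}\xi_i)-\E\phi_\lambda(G)}$, with $\xi_i=X_i-\mu$, is $O(\beta_3 m^{-1/2})$ with constants depending only on $\lambda,\sigma,\beta_3$. I will use Lindeberg replacement.
\begin{itemize}[leftmargin=*]
\item I swap the $\xi_i$ one at a time for independent copies $g_i\sim\mathcal N(0,\Gamma)$, scaled by $m^{-1/2}$.
\item For each test direction $e$ with $\norm e\le1$, I Taylor-expand $f_e=\ip{e}{\phi_\lambda(\cdot)}$ to second order with a third-order remainder.
\item The first- and second-order terms cancel because the means and covariances match.
\end{itemize}
The catch is that $\phi_\lambda$ need not have a globally bounded third derivative of the required type. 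A direct computation should still give $\norm{D^3\phi_\lambda}\le c/\lambda^2$ globally, since each derivative gains a factor $(1+\norm y^2/\lambda^2)^{-1/2}/\lambda$. If so, each swap costs at most $c\lambda^{-2}m^{-3/2}(\beta_3+\E\norm g^3)$. Here $\E\norm g^3\lesssim\sigma^3\le\beta_3$, and the bound is uniform in $e$. Summing the $m$ swaps gives $O(m^{-1/2})$. Taking the supremum over $e$ handles the infinite-dimensional norm, since the bound is uniform in $e$. If the third-derivative bound fails, my fallback is to smooth with a small Gaussian convolution and use the Lipschitz constants.

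\textbf{Operator limits.} For $A$, I split
\[
A_{m,\lambda}-A^G_\lambda=\E[H_\lambda(Y_m-u_{m,\lambda})-H_\lambda(Y_m)]+\E[H_\lambda(Y_m)-H_\lambda(G)].
\]
The first bracket is at most $(c/\lambda)\norm{u_{m,\lambda}}\to0$ by the Lipschitz bound. For the second, $Y_m\Rightarrow G$ in $\Hcal$ by the Hilbert CLT. I then apply Skorokhod representation and bounded convergence to the bounded, continuous, operator-valued map $H_\lambda$. Norm convergence follows because $Y_m\to G$ almost surely, so $\norm{H_\lambda(Y_m)-H_\lambda(G)}_{\op}\to0$ almost surely and is bounded by $2$.

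For $B$ in trace norm, I handle the centering shift the same way, using $\norm{\phi(a)\otimes\phi(a)-\phi(b)\otimes\phi(b)}_1\le2\lambda\norm{a-b}$. The remaining term is $\E[\phi_\lambda(Y_m)^{\otimes2}-\phi_\lambda(G)^{\otimes2}]$. Under the Skorokhod coupling, the trace-norm integrand is at most $2\lambda\norm{Y_m-G}$. This is uniformly integrable because $\E\norm{Y_m}^2=\sigma^2$ is bounded, and it tends to zero almost surely. Vitali's theorem then gives convergence to zero.
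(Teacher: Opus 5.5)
Your proposal is correct and follows the paper's route. You use the fundamental-theorem-of-calculus identity with a uniform Hessian floor to reduce the bias to $\norm{\E\phi_\lambda(Y_m)-\E\phi_\lambda(G)}$, bound this by Lindeberg replacement using a global $D^3\phi_\lambda=O(\lambda^{-2})$ estimate, and obtain the operator limits from the Hilbert CLT, a Skorokhod coupling and dominated convergence. That third-derivative bound does hold (the paper proves $36/\lambda^2$), so your smoothing fallback is unnecessary. The remaining differences are cosmetic: you bound $\norm{u_{m,\lambda}}$ by a population angle/majority argument instead of the paper's comparison $Q(u_{m,\lambda})\le Q(0)$ with $\lambda t-\lambda^2\le p_\lambda(t)\le\lambda t$, which gives $2\sigma+\lambda$, and you get the Hessian floor from a probability-$1/2$ event instead of Jensen's inequality.
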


\begin{corollary}[Projected inference for the mean]\label{cor:triangular}
Assume Proposition~\ref{prop:primitive-bias}. Let $m,k\to\infty$ with $k=o(m)$. For each fixed $q$ and fixed deterministic bounded map $L:\Hcal\to\R^q$,
\[
    \sqrt n\,L(\hat\mu_{\lambda/\sqrt m}-\mu)
    \Rightarrow
    N_q\!\left(0,L V_\lambda^{G}L^*\right),
\]
where $L^*$ is the Hilbert adjoint. Moreover,
$L\hat V_{m,\lambda}L^*\to_p L V_\lambda^{G}L^*$ in matrix norm.
\end{corollary}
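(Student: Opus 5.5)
The plan is to decompose
\[
\sqrt n\,L(\hat\mu_{\lambda/\sqrt m}-\mu)=\sqrt n\,L(\hat\mu_{\lambda/\sqrt m}-\theta_{m,\lambda})+\sqrt k\,L u_{m,\lambda},
\]
using $\sqrt n(\theta_{m,\lambda}-\mu)=\sqrt n\,u_{m,\lambda}/\sqrt m=\sqrt k\,u_{m,\lambda}$. By Proposition~\ref{prop:primitive-bias}, $\norm{\sqrt k\,Lu_{m,\lambda}}\le\norm{L}_{\op}\sqrt k\,O(m^{-1/2})=O(\sqrt{k/m})\to0$ because $k=o(m)$. This is exactly where the condition that $m$ grows faster than $k$ enters. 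The bias term therefore vanishes, and it remains to handle the stochastic term along the triangular array indexed by $m$.

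For the stochastic term I would redo the asymptotic linearity argument of Theorem~\ref{thm:fixed-m-inference} uniformly in $m$. Write $\hat v=\hat u-u_{m,\lambda}$ and $S_k=k^{-1/2}\sum_j\phi_\lambda(Y_{m,j}-u_{m,\lambda})$. The summands are iid within each row, centered (the population score vanishes), and bounded by $\lambda$, with covariance $B_{m,\lambda}$. Hence $\E\norm{S_k}^2=\tr B_{m,\lambda}\le\lambda^2$, so $S_k=O_p(1)$ uniformly.

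The key steps are the following.
\begin{enumerate}[label=(\alph*),leftmargin=*]
\item \emph{Consistency.} Show $\hat v\to_p0$ using the strong convexity of the pseudo-Huber objective on bounded sets. The minimum eigenvalue of $A_{m,\lambda}$ must be bounded below uniformly in $m$. This follows from $\norm{A_{m,\lambda}-A^G_\lambda}_{\op}\to0$ and the invertibility of $A^G_\lambda$, which is proved as in Appendix~\ref{app:inference}, since $G$ is Gaussian with $\E\norm G^2=\sigma^2$.
\item \emph{Linearization.} A Taylor expansion of the score equation gives $0=S_k-\tilde A\,\sqrt k\,\hat v$, where $\tilde A=k^{-1}\sum_j\int_0^1H_\lambda(Y_{m,j}-u_{m,\lambda}-t\hat v)\,dt$.
\item \emph{Hessian control.} Show $\norm{\tilde A-A_{m,\lambda}}_{\op}\to_p0$. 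The Hessian $H_\lambda$ is bounded and Lipschitz with a constant depending only on $\lambda$, which handles the shift by $\hat v$. For the averaging, only the projected version is needed: $L\tilde A^{-1}$ acting on $S_k$. So it suffices to control $\tilde A-A_{m,\lambda}$ applied to vectors. Since the summands are bounded, $\norm{k^{-1}\sum_j(H_\lambda(\cdot)-A_{m,\lambda})}_{\mathcal S_2}$ might not be small in infinite dimensions. Instead I would bound $\norm{(\tilde A-A_{m,\lambda})A_{m,\lambda}^{-1}S_k}$ by using the independence of $S_k$ up to bounded terms, or more simply by a Hilbert--Schmidt LLN on the operator-valued summands $H_\lambda(y)-I$. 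These are of the form $-c(y)\,y\otimes y$ with $\E\norm{\cdot}_{\mathcal S_2}^2$ bounded uniformly in $m$, so the LLN gives the rate $k^{-1/2}$.
\end{enumerate}
Combining these gives $\sqrt n\,L(\hat\mu-\theta_{m,\lambda})=LA_{m,\lambda}^{-1}S_k+o_p(1)$.

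For the CLT I would apply Lindeberg--Feller for the $\R^q$-valued triangular array $LA_{m,\lambda}^{-1}\phi_\lambda(\cdot)/\sqrt k$. These summands are uniformly bounded by $\norm L\lambda\sup_m\norm{A_{m,\lambda}^{-1}}/\sqrt k\to0$, so the Lindeberg condition is immediate. The covariance is $LA_{m,\lambda}^{-1}B_{m,\lambda}A_{m,\lambda}^{-1}L^*$, which converges to $LV^G_\lambda L^*$ by Proposition~\ref{prop:primitive-bias} together with the continuity of inversion. Slutsky's lemma then gives the first display.

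For the second claim, $\hat A$ and $\hat B$ are evaluated at $\hat Y_j=Y_{m,j}-\hat u$, with $\hat u=u_{m,\lambda}+\hat v$. Here $\hat v\to_p0$ and $u_{m,\lambda}\to0$. By the Lipschitz property of $\phi_\lambda$ and $H_\lambda$, each is within $o_p(1)$ of the same average evaluated at $Y_{m,j}-u_{m,\lambda}$. The Hilbert--Schmidt LLN and the operator limits then give convergence to $A^G_\lambda$ and $B^G_\lambda$ after projection by $L$. For the trace-class $B$ part, it suffices to obtain $\mathcal S_2$ convergence of the projected finite-rank quantities $L\hat A^{-1}\hat B\hat A^{-1}L^*$, which are $q\times q$ matrices.

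The main obstacle I expect is step (c): controlling the inverse $\hat A^{-1}$ or $\tilde A^{-1}$ uniformly over the triangular array in infinite dimensions. The fixed-$m$ appendix argument may rely on fixed-distribution compactness, which is not available here. I would first try the $\mathcal S_2$ LLN route combined with a Neumann-series perturbation, $\norm{\tilde A^{-1}-A^{-1}}_{\op}\le 2\norm{A^{-1}}^2\norm{\tilde A-A}_{\op}$ once $\norm{\tilde A-A}_{\op}$ is small.
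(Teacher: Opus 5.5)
Your overall architecture is the paper's. You split off the bias term $\sqrt k\,Lu_{m,\lambda}=O(\sqrt{k/m})$, make the localization and integral-Taylor linearization uniform in $m$, and apply the finite-dimensional Lindeberg theorem with summands of size $O(k^{-1/2})$. The covariance is identified through Proposition~\ref{prop:primitive-bias}, and the sandwich is handled by a Lipschitz plug-in. The gap is in step (c), exactly where you place the main obstacle, and your proposed repair there does not work. The summands $H_\lambda(y)-I$ are not of the form $-c(y)\,y\otimes y$. By \eqref{eq:pseudo-hessian}, $H_\lambda(y)-I=(a(y)-1)I-\lambda^{-2}a(y)^3\,y\otimes y$. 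A nonzero multiple of the identity is not Hilbert--Schmidt in infinite dimensions, so the $\mathcal S_2$ law of large numbers applies neither to these summands nor to $H_\lambda(y)-A_{m,\lambda}$. The alternative via ``independence of $S_k$'' is also unavailable, since $S_k$ and $\tilde A$ are built from the same block means. The missing step is to treat the two pieces separately, as the paper does in \eqref{eq:uniform-hessian}. Write $\hat A_{k,m}(u_{m,\lambda})=k^{-1}\sum_jH_\lambda(Y_{m,j}-u_{m,\lambda})$. Average the scalar coefficients $a(Y_{m,j}-u_{m,\lambda})\in(0,1]$ by a scalar law of large numbers, with variance at most $1/k$. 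Only the rank-one part, whose Hilbert--Schmidt norm $s^2(1+s^2)^{-3/2}$ is uniformly bounded, goes through the $\mathcal S_2$ law of large numbers. The operator norm is at most the scalar deviation plus the Hilbert--Schmidt deviation, so $\E\norm{\hat A_{k,m}(u_{m,\lambda})-A_{m,\lambda}}_{\op}^2\le C_\lambda/k$ uniformly in $m$.

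Given that bound, the $6/\lambda$ Lipschitz estimate of Lemma~\ref{lem:derivative-bounds}, and a uniform floor on $A_{m,\lambda}$, your Neumann-series control of $\tilde A^{-1}$ goes through. No fixed-distribution compactness is involved. The paper obtains $A_{m,\lambda}\succeq c_{\sigma,\lambda}I$ for every $m$ directly from $\norm{u_{m,\lambda}}\le2\sigma+\lambda$ and Lemma~\ref{lem:population-floor}. Your route through $A_\lambda^G$ gives the floor only for large $m$, which also suffices. Two related points remain.
\begin{enumerate}[leftmargin=*]
\item In step (a), consistency needs a floor on the \emph{empirical} Hessian throughout a fixed ball around $u_{m,\lambda}$, not only on $A_{m,\lambda}$. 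The paper derives it from the corrected operator-norm bound at $u_{m,\lambda}$ plus the Lipschitz estimate. You can also get it directly from $H_\lambda(y)\succeq a(y)^3I$ and Jensen's inequality for the empirical measure, since $k^{-1}\sum_j\norm{Y_{m,j}}^2=O_p(1)$ uniformly in $m$.
\item In the sandwich part, $\hat A$ needs the same scalar-plus-rank-one split. Inverting $\hat A$ requires operator-norm convergence of $\hat A$ itself, not merely convergence after projection by $L$. Your treatment of $\hat B$ is fine as written, since $\phi_\lambda\otimes\phi_\lambda$ is rank one with trace norm at most $\lambda^2$.
\end{enumerate}
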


Thus $k\asymp n^\gamma$ supports asymmetric mean inference for $0<\gamma<1/2$. Under central symmetry, $k=o(m)$ is unnecessary for centering. The Gaussian-block limit still requires $k,m\to\infty$. The bias constant deteriorates as $\lambda\downarrow0$ because mean-target control weakens near MOM. Geometric medians also have inference theory \citep{cardot_2013_EfficientFastEstimation,cardot_2017_OnlineEstimationGeometric}. HOMER supplies a smooth blockwise Hessian and direct sandwich estimate.

%----%----%----%----%----%----%----%----%----%----%----%----%----%

\section{ALGORITHM}

For either loss, define the continuous weight function
\[
 w_\tau(r)=
 \begin{cases}
  1\wedge(\tau/r), & \rho_\tau=h_\tau,\\
  (1+r^2/\tau^2)^{-1/2}, & \rho_\tau=p_\tau,
 \end{cases}
 \qquad w_\tau(0)=1.
\]
The majorization-minimization update is
\begin{equation}\label{eq:mm-update}
    \theta^{(t+1)}=
    \frac{\sum_{j=1}^k w_\tau(r_j^{(t)})Z_j}
         {\sum_{j=1}^k w_\tau(r_j^{(t)})},
    \qquad r_j^{(t)}=\norm{Z_j-\theta^{(t)}}.
\end{equation}

\begin{algorithm}[t]
\caption{HOMER by majorization-minimization}
\label{alg:homer-mm}
\begin{algorithmic}[1]
\Require Block means $Z_1,\ldots,Z_k$, threshold $\tau>0$, loss $\rho_\tau$, tolerance $\varepsilon$, maximum iterations $T\ge1$.
\State Initialize $\theta^{(0)}\gets k^{-1}\sum_{j=1}^k Z_j$.
\For{$t=0,\ldots,T-1$}
    \For{$j=1,\ldots,k$}
        \State $r_j^{(t)}\gets\norm{Z_j-\theta^{(t)}}$; $w_j^{(t)}\gets w_\tau(r_j^{(t)})$.
    \EndFor
    \State Update $\theta^{(t+1)}$ by \eqref{eq:mm-update}.
    \If{$\norm{\theta^{(t+1)}-\theta^{(t)}}\le\varepsilon\max\{1,\norm{\theta^{(t)}}\}$}
        \State \textbf{break}
    \EndIf
\EndFor
\State \Return the final iterate $\theta^{(t+1)}$.
\end{algorithmic}
\end{algorithm}

The update is an exact Huber-weighted Weiszfeld/IRLS majorization step \citep{beck_2015_WeiszfeldsMethodOld,cardot_2013_EfficientFastEstimation}. Therefore, $F_\tau(\theta^{(t+1)})\le F_\tau(\theta^{(t)})$ without a line search. For pseudo-Huber, strict convexity gives a unique target. Moreover, the iterates converge to it, which is shown in Appendix~\ref{app:algorithm}. The computation uses only the $k$ block means. In an RKHS, their $k\times k$ Gram matrix provides the distances and updates.

%----%----%----%----%----%----%----%----%----%----%----%----%----%

\section{EXPERIMENTS}\label{sec:experiments}

The experiments examine resistance to displaced block summaries, recovery of mean efficiency, and projected uncertainty at finite $k$. The first simulated example studies these properties for a finite-dimensional Hilbert mean. The second simulated example uses function-valued observations. A real-data example treats subjects as sampling units in a trajectory analysis. Additional experiments for different types of objects are deferred to Appendix~\ref{app:experiments}.

\paragraph{Protocol and reporting scope.}
Pseudo-Huber HOMER is primary, while canonical Huber displays the exact quadratic basin. Baselines include the empirical mean and geometric MOM, with task-specific raw-Huber, coordinatewise-MOM, and radial-truncation comparisons. Unless inference fixes $\tau=\lambda/\sqrt m$, the threshold has the form
\[
    \hat\tau=c\cdot \mathrm{median}_{1\le j\le k}\norm{Z_j-\tilde\mu},
\]
where $\tilde\mu$ is a preliminary MOM center. Threshold grids depend on the study. The first simulated example uses $c\in\{0.5,2,8\}$ for its displayed efficiency path. Broader sensitivity paths may use $c\in\{1,2,4,8\}$. Other displayed point comparisons use $c=2$. Methods within a simulation replicate share data, blocks, and perturbations. The displayed synthetic results use a reduced grid with 500 replications. The two simulated examples use 20,000 independent draws to approximate finite-block targets. Horizontal axes report the realized fraction $\lfloor\epsilon k\rfloor/k$ whenever whole blocks are selected. Data-driven thresholds and fixed-$k$ coverage are empirical diagnostics outside the adaptive and triangular-array theories.

\subsection{Robustness, efficiency, and inference.}
The first simulated example combines the three aims in a controlled Hilbert-mean setting. Student-$t_3$ samples test high-quantile stability under complete-block shifts. Clean Gaussian samples trace the threshold path. Gaussian and skewed samples assess projected sandwich intervals for the mean and finite-block pseudo-Huber target.

\begin{figure*}[t]
\centering
\resizebox{\textwidth}{!}{\input{figures/hilbert_mean_robustness_efficiency_inference.pgf}}
\caption{\textbf{Finite-dimensional Hilbert mean: robustness, efficiency, and inference.}
(a) Student-$t_3$ data have $n=256$, $d=40$, $k=16$, and $c=2$. Error is the 95th percentile over 500 replications. (b) Clean Gaussian MSE is relative to the empirical mean for $c\in\{0.5,2,8\}$. (c) Empirical coverage uses a fixed projection with $n=256$, $d=20$, and $k=16$. The horizontal line is 0.95.}
\label{fig:hilbert-mean-overview}
\end{figure*}

With one quarter of the block summaries displaced, the empirical mean's 95th-percentile error increased from $0.098$ to $3.017$. Geometric MOM, canonical HOMER, and pseudo-HOMER yielded $0.133$, $0.221$, and $0.251$ under the same stress. On clean Gaussian data, canonical and pseudo-HOMER were within $1.5\%$ of empirical-mean MSE at $c=2$. Both reached the mean endpoint at $c=8$. Geometric MOM remained between $13\%$ and $17\%$ above empirical MSE across this grid. Gaussian mean coverage was $0.936$ at both $\lambda=1$ and $4$. Under skewness, mean coverage was $0.932$ and $0.920$, while coverage of the simulated block-Huber target was $0.944$ and $0.920$. Bounded-score methods remained stable under this stress. The finite-$k$ intervals, however, remained mildly below nominal coverage.

\subsection{Functional means and projected inference}
Fourier coefficients represent random curves in an orthonormal basis, so coefficient error is exactly truncated $L^2$ error. The inferential targets are three continuous linear functionals. They comprise an interval average, a Gaussian-smoothed evaluation, and an evening-minus-morning contrast. These functionals are continuous on $L^2$. Literal point evaluation is not.

Under Student-$t_3$ sampling, one-quarter block contamination raised the empirical 95th-percentile $L^2$ error from $0.112$ to $2.524$. The stressed errors were $0.132$ for MOM, $0.191$ for canonical HOMER, and $0.211$ for pseudo-HOMER. For the evening-minus-morning contrast, Gaussian mean coverage was $0.924$ at $\lambda=1$ and $0.906$ at $\lambda=4$. Under skewness, mean coverage was $0.882$ and $0.854$, with nearly identical finite-block-target coverage. In this experiment, block aggregation limited the $L^2$ error increase, but the eight-block sandwich intervals were undercalibrated.

\subsection{Wearable trajectory data}
The UCI Human Activity Recognition (HAR) application \citep{jorgereyes-ortiz_2013_HumanActivityRecognition} averages overlapping windows within subject and activity before estimation. This leaves 30 subject-level sampling units rather than thousands of overlapping windows. Subject-row and sensor-channel perturbations assess stability. Held-out-subject nearest-template classification provides only a downstream diagnostic.

\begin{figure*}[t]
\centering
\resizebox{\textwidth}{!}{\input{figures/homer_experiment_summary.pgf}}
\caption{\textbf{Functional and subject-level applications.}
Panels (a) and (b) show reduced-grid functional robustness and evening-minus-morning coverage. Each panel uses 500 replications. Panel (c) shows distance from the unstressed empirical WALKING trajectory under subject spikes at $c=2$. Panel (d) gives held-out nearest-template accuracy under sensor failure.}
\label{fig:main-applications}
\end{figure*}

At $c=2$ with 10\% subject-spike contamination, ordinary averaging moved the WALKING trajectory $0.683$ from its unstressed empirical reference. The corresponding distance was $0.283$ for MOM and about $0.53$ for each HOMER variant. The separation narrowed at 20\% and disappeared at 30\%, as corruption spread across most ordinary-mean blocks. Under sensor failures, nearest-template accuracies were similar across estimators at every stress level. These classification results do not indicate predictive superiority. 

%----%----%----%----%----%----%----%----%----%----%----%----%----%
\section{DISCUSSION}

HOMER should be viewed as a thresholded alternative to geometric MOM, not a uniformly stronger estimator. Its finite-sample bound depends on total Hilbert variance and is not anisotropically optimal. The experiments also show no uniform advantage, and geometric MOM can be more stable under severe block contamination. Related geometric-median methods already support asymptotic inference, confidence regions, and robust PCA \citep{cardot_2013_EfficientFastEstimation,cardot_2017_OnlineEstimationGeometric,cardot_2017_FastEstimationMedian}. Directional guarantees would better describe high-dimensional covariance and kernel applications. Comparisons should also vary how contamination reaches blocks.

The inferential theory has narrower scope than the estimation result. At fixed $m$, sandwich inference targets $\theta_{m,\lambda}$ rather than the mean unless symmetry removes the block bias. For asymmetric distributions, mean inference requires a finite third moment, fixed finite-dimensional projections, and $k=o(m)$ as $k$ and $m$ grow. The inferential experiments instead use eight or sixteen blocks and show undercoverage, especially for skewed functional data. Finite-sample calibration should address block-target bias and the small numbers of summaries used in practice. Growing-dimensional or data-dependent projections would require uniform covariance and bias control. Canonical-Huber inference may be possible when the block-mean distribution assigns no mass to the threshold sphere. Its discontinuous empirical Hessian still requires an argument beyond the smooth pseudo-Huber analysis.

The robustness guarantee concerns block summaries, not arbitrary raw-point contamination. Dispersed contamination can compromise most ordinary block means even when only a small fraction of observations is corrupted. Replacing ordinary block means with robust summaries could address this case. This change would require separate bias and efficiency analyses in Hilbert space. The deviation theorem also uses an oracle threshold based on $\sigma$, while the median-distance rule used in experiments lacks theoretical coverage. The Lepski-type grid in Appendix~\ref{app:tuning} offers one route toward adaptive guarantees. Joint selection of $k$ and $\tau$ remains open because both choices affect robustness, bias, and interval calibration. Finally, the financial and demand studies involve dependence outside the iid framework. Extensions to dependent blocks and time-varying targets are needed before those applications support formal uncertainty statements.

% If you use BibTeX in apalike style, activate the following line:
\bibliographystyle{apalike}
\bibliography{references}

%%%%%%%%%%%%%%%%%%%%%%%%%%%%%%%%%%%%%%%%%%%%%%%%%%%%%%%%%%%%

%\clearpage
\appendix
%\thispagestyle{empty}

% Supplementary material: To improve readability, you must use a single-column format for the supplementary material.
%\onecolumn
%\aistatstitle{Supplementary M}

\section{PROOFS}\label{app:proofs}

\subsection{Basic properties}\label{app:proof-basic}

\begin{proof}[Proof of Proposition~\ref{prop:existence}]
Let $C=\operatorname{conv}\{Z_1,\ldots,Z_k\}$. It is a compact subset of the finite-dimensional span of the block means. If $\Pi_C\theta$ denotes the metric projection onto $C$, the Hilbert projection inequality gives
\[
    \norm{z-\Pi_C\theta}^2
    \le \norm{z-\theta}^2-\norm{\theta-\Pi_C\theta}^2,
    \qquad z\in C.
\]
When $\theta\notin C$, projection strictly reduces every distance to a block mean. Both radial losses are strictly increasing. Therefore, $F_\tau(\Pi_C\theta)<F_\tau(\theta)$. Every minimizer lies in $C$, and continuity on compact $C$ gives existence. Write each $\theta\in C$ as $\sum_j a_jZ_j$, where $a$ belongs to the compact simplex. This parameterization gives a measurable compact-valued argmin problem. The measurable maximum theorem supplies a Borel selection as a function of $(Z_1,\ldots,Z_k)$.

Both radial objectives are convex and Fr\'echet differentiable. Their gradients are the score maps in \eqref{eq:canonical-score} and \eqref{eq:pseudo-score}, including at the origin. The first-order condition for a convex differentiable objective gives \eqref{eq:score}. The displayed formulas give \eqref{eq:bounded-score}. Finally, $y\mapsto p_\tau(\norm y)$ has an everywhere positive-definite Hessian and is strictly convex. Averaging preserves strict convexity.
\end{proof}

\begin{proof}[Proof of Proposition~\ref{prop:endpoints}]
Let $C=\operatorname{conv}\{Z_1,\ldots,Z_k\}$ and
$M_C=\max_{\theta\in C,\,1\le j\le k}\norm{\theta-Z_j}<\infty$.
For canonical Huber,
\[
    \sup_{0\le t\le M_C}\left|\frac{h_\tau(t)}{\tau}-t\right|\le\frac\tau2,
\]
and for pseudo-Huber,
\[
    \frac{p_\tau(t)}{\tau}=\sqrt{t^2+\tau^2}-\tau,
    \qquad
    \sup_{0\le t\le M_C}\left|\frac{p_\tau(t)}{\tau}-t\right|\le\tau.
\]
The residuals are uniformly bounded on compact $C$. Therefore, both rescaled objectives converge uniformly to the geometric-median objective. Argmin continuity proves part (i).

For part (ii), suppose every residual at $\bar Z$ is at most $\tau$. Then
\[
    \nabla F_\tau(\bar Z)=\frac1k\sum_{j=1}^k(\bar Z-Z_j)=0.
\]
Convexity makes $\bar Z$ a global minimizer. If every residual is strictly below $\tau$, the objective is locally equal to a strictly convex quadratic objective. A second global minimizer would make the segment between both minimizers flat. This would contradict local strict convexity. Hence the minimizer is unique.

For part (iii), the pseudo-Huber minimizer lies in compact $C$. On the bounded residual range generated by $C$,
\[
    p_\tau(t)\longrightarrow t^2/2
\]
uniformly as $\tau\to\infty$. The limiting quadratic objective has unique minimizer $\bar Z$. Uniform argmin convergence therefore proves convergence. For the rate, let $D_C=\max_{i,j}\norm{Z_i-Z_j}$. Every residual generated by $C$ is at most $D_C$. The pseudo-Huber Hessian is bounded below by $c_\tau I$, where $c_\tau=(1+D_C^2/\tau^2)^{-3/2}$. At $\bar Z$, use $\sum_j(\bar Z-Z_j)=0$ and $1-(1+s)^{-1/2}\le s/2$ to obtain
\[
    \norm{\nabla F_\tau(\bar Z)}
    \le \frac1{2k\tau^2}\sum_{j=1}^k\norm{Z_j-\bar Z}^3
    \le \frac{D_C^3}{2\tau^2}.
\]
Strong gradient monotonicity between $\bar Z$ and the minimizer gives
$c_\tau\norm{\hat\mu_\tau-\bar Z}\le\norm{\nabla F_\tau(\bar Z)}$. Therefore
\begin{equation*}
    \norm{\hat\mu_\tau-\bar Z}
    \le \frac{D_C^3}{2\tau^2}
    \left(1+\frac{D_C^2}{\tau^2}\right)^{3/2},
\end{equation*}
which proves the $O(\tau^{-2})$ claim and the stated explicit bound.
\end{proof}

\subsection{Hilbert angle lemma and majority proofs}\label{app:proof-majority}

\begin{lemma}[Angle of a point outside a Hilbert ball]\label{lem:angle}
Let $D=\norm{\theta-\mu}>r$ and $v=(\theta-\mu)/D$. For every $z$ with $\norm{z-\mu}\le r$,
\[
    \left\langle v,\frac{\theta-z}{\norm{\theta-z}}\right\rangle
    \ge\sqrt{1-r^2/D^2}.
\]
\end{lemma}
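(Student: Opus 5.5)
The plan is to reduce everything to the two-dimensional plane spanned by $v$ and $z-\mu$, where the claim becomes an elementary tangent-cone inequality. First I translate so that $\mu=0$. Then I write $\theta = Dv$ and $z$ with $\norm z\le r$. Since $D>r\ge\norm z$, we have $\theta\neq z$, so the unit vector $(\theta-z)/\norm{\theta-z}$ is well defined.

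\textbf{Reduce to a one-variable inequality.} Set $a=\ip{v}{z}$ and $b^2=\norm z^2-a^2$, so that $a^2+b^2\le r^2$ and $|a|\le r<D$. Then
\[
\ip{v}{\theta-z}=D-a>0,\qquad \norm{\theta-z}^2=(D-a)^2+b^2 .
\]
The quantity to bound is therefore
\[
\frac{D-a}{\sqrt{(D-a)^2+b^2}} .
\]
It is increasing in $D-a$ and decreasing in $b^2$. So I can replace $b^2$ by its largest admissible value $r^2-a^2$, and it suffices to show
\[
\frac{(D-a)^2}{(D-a)^2+r^2-a^2}\ge 1-\frac{r^2}{D^2}, \qquad |a|\le r .
\]

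\textbf{Verify it by cross-multiplying.} Both denominators are positive, so the inequality is equivalent to
\[
D^2(D-a)^2 \ge (D^2-r^2)\{(D-a)^2+r^2-a^2\}.
\]
Expanding, this reduces to
\[
r^2(D-a)^2 \ge (D^2-r^2)(r^2-a^2).
\]
Writing out both sides, it becomes
\[
r^2D^2-2r^2Da+r^2a^2 \ge D^2r^2 - D^2a^2 - r^4 + r^2a^2,
\]
which simplifies to
\[
D^2a^2 - 2r^2Da + r^4 \ge 0,
\]
that is, $(Da-r^2)^2\ge0$. This is always true, and equality holds at $a=r^2/D$, which is exactly the tangent point. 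This matches the paper's remark about the least favorable direction.

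\textbf{Main obstacle.} The only delicate part is justifying the monotonicity reduction. I must confirm that $D-a>0$, which follows from $a\le r<D$, so the function $x\mapsto x/\sqrt{x^2+b^2}$ is being used in its increasing regime. No infinite-dimensional issues arise, because only the two-dimensional span of $v$ and $z$ matters.
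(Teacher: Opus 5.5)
Your proof is correct and follows essentially the same route as the paper: both decompose $z-\mu$ into a component $av$ along $v$ and an orthogonal part $b$, bound $\norm b^2$ by $r^2-a^2$, and reduce the squared-cosine inequality to $(Da-r^2)^2\ge0$. Your explicit check that $D-a>0$ makes the cosine positive, so squaring is harmless; the paper uses this point only implicitly.
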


\begin{proof}
Write $z-\mu=av+b$, where $b\perp v$. Then $a^2+\norm b^2\le r^2$ and $D-a>0$. The squared directional cosine is
\[
    \frac{(D-a)^2}{(D-a)^2+\norm b^2}.
\]
It is enough to prove $r^2(D-a)^2\ge(D^2-r^2)\norm b^2$. Since $\norm b^2\le r^2-a^2$, this result follows from
\begin{align*}
&r^2(D-a)^2-(D^2-r^2)(r^2-a^2)\\
&\hspace{25mm}=(Da-r^2)^2\ge0.
\end{align*}
Taking square roots proves the claim.
\end{proof}

\begin{proof}[Proof of Theorem~\ref{thm:deterministic}]
Let $\hat\theta=\hat\mu_\tau$ and suppose $D=\norm{\hat\theta-\mu}>2r$. Put $v=(\hat\theta-\mu)/D$. For a good block, $t_j=\norm{\hat\theta-Z_j}\ge D-r>r\ge\tau$. Lemma~\ref{lem:angle} gives a directional cosine larger than $\sqrt3/2$.

For canonical Huber, $\norm{\Psi^{\mathrm H}_\tau(\hat\theta-Z_j)}=\tau$. For pseudo-Huber,
\[
    \norm{\Psi^{\mathrm P}_\tau(\hat\theta-Z_j)}
    =\frac{t_j}{\sqrt{1+t_j^2/\tau^2}}
    =\tau\frac{t_j}{\sqrt{\tau^2+t_j^2}}
    \ge\frac\tau{\sqrt2}.
\]
Thus, for either loss,
\[
    \ip{v}{\Psi_\tau(\hat\theta-Z_j)}
    >\tau\sqrt{3/8}
    \qquad(j\in\mathcal G).
\]
For every bad block, bounded score gives $\ip{v}{\Psi_\tau(\hat\theta-Z_j)}\ge-\tau$. Projecting \eqref{eq:score} onto $v$ therefore yields
\[
    0=\frac1k\sum_{j=1}^k
    \ip{v}{\Psi_\tau(\hat\theta-Z_j)}
    >\tau\left\{\frac58\sqrt{\frac38}-\frac38\right\}>0,
\]
a contradiction. Hence $D\le2r$.
\end{proof}

\begin{proof}[Proof of Theorem~\ref{thm:high-prob}]
Independence and zero means eliminate the cross terms. Therefore, for each block,
\[
    \E\norm{Z_j-\mu}^2
    =\frac1m\E\norm{X-\mu}^2
    =\frac{\sigma^2}{m}.
\]
Chebyshev's inequality gives
\[
    \Pp\left\{\norm{Z_j-\mu}>2\frac\sigma{\sqrt m}\right\}\le\frac14.
\]
Let $I_j=\ind\{\norm{Z_j-\mu}>2\sigma/\sqrt m\}$. The $I_j$ are independent, and $\E I_j\le1/4$. Hoeffding's inequality gives
\[
    \Pp\left\{\frac1k\sum_{j=1}^k I_j>\frac38\right\}
    \le\exp\left[-2k\left(\frac38-\frac14\right)^2\right]
    =e^{-k/32}.
\]
On the complementary event, at least $5k/8$ block means are within radius $r=2\sigma/\sqrt m$. Theorem~\ref{thm:deterministic} gives the bound $2r=4\sigma/\sqrt m$.
\end{proof}

\subsection{General majority radii}\label{app:general-majority}

The simple $5/8$ theorem gives one statement for both losses. The Hilbert angle lemma also yields fraction-dependent bounds.

\begin{proposition}[Canonical Huber, arbitrary good fraction]\label{prop:canonical-general}
Suppose at least $(1-\eta)k$ block means lie in $B(\mu,r)$, where $0\le\eta<1/2$. For canonical Huber with any $\tau>0$,
\[
    \norm{\hat\mu_\tau-\mu}
    \le
    \max\left\{r+\tau,
    \frac{1-\eta}{\sqrt{1-2\eta}}\,r\right\}.
\]
\end{proposition}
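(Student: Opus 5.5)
We argue by contradiction, following the proof of Theorem~\ref{thm:deterministic}. Let $\hat\theta$ be any canonical HOMER solution and set
\[
    D=\norm{\hat\theta-\mu},\qquad v=(\hat\theta-\mu)/D,
\]
and suppose $D>\max\{r+\tau,(1-\eta)r/\sqrt{1-2\eta}\}$. We project the score equation \eqref{eq:score} onto $v$ and show the average is strictly positive.

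\emph{Good blocks.} For every good block $j$, $t_j=\norm{\hat\theta-Z_j}\ge D-r>\tau$. The canonical score is therefore saturated: $\Psi^{\mathrm H}_\tau(\hat\theta-Z_j)=\tau(\hat\theta-Z_j)/t_j$. Lemma~\ref{lem:angle} then gives
\[
    \ip{v}{\Psi^{\mathrm H}_\tau(\hat\theta-Z_j)}\ge\tau\sqrt{1-r^2/D^2}.
\]
This is the role of the first term $r+\tau$ in the maximum: it forces clipping at full magnitude $\tau$. Without clipping, the good scores would be shorter and the comparison below would fail.

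\emph{Bad blocks.} For every bad block, \eqref{eq:bounded-score} gives $\ip{v}{\Psi_\tau(\hat\theta-Z_j)}\ge-\tau$.

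\emph{Combining.} Let $g\ge(1-\eta)k$ be the number of good blocks, so there are at most $k-g\le\eta k$ bad ones. Projecting \eqref{eq:score} onto $v$ gives
\[
    0=\frac1k\sum_{j=1}^k\ip{v}{\Psi_\tau(\hat\theta-Z_j)}
    \ge\tau\left\{\frac gk\sqrt{1-\frac{r^2}{D^2}}-\Bigl(1-\frac gk\Bigr)\right\}.
\]
The right side increases in $g$, so we may take $g/k=1-\eta$ and need
\[
    (1-\eta)\sqrt{1-r^2/D^2}>\eta .
\]
Squaring, this is $(1-\eta)^2(1-r^2/D^2)>\eta^2$, that is, $D^2(1-2\eta)>(1-\eta)^2r^2$. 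This holds exactly when $D>(1-\eta)r/\sqrt{1-2\eta}$, and $\eta<1/2$ makes the denominator positive. The strict inequality contradicts the vanishing score, so $D$ is bounded by the displayed maximum.

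The main care is needed at two points. First, Lemma~\ref{lem:angle} requires $D>r$; this is implied because $D>r+\tau$. Second, the saturation claim uses the strict inequality $D-r>\tau$, which holds because $D$ strictly exceeds $r+\tau$. Degenerate cases ($r=0$ or $\eta=0$) are covered by the same computation, since then all good scores point along $v$ with full length $\tau$. The argument applies to every minimizer because it uses only the first-order condition \eqref{eq:score}, which Proposition~\ref{prop:existence} guarantees for all solutions.
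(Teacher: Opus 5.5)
Your proof is correct and is essentially the paper's argument. Once $D>r+\tau$, every good score is saturated at norm $\tau$, and Lemma~\ref{lem:angle} together with the bad-block bound $-\tau$ turns the projected score equation \eqref{eq:score} into $(1-\eta)\sqrt{1-r^2/D^2}\le\eta$, which solves to the second radius; your contradiction framing merely restates the paper's direct case split. (Minor point: your remark that for $\eta=0$ all good scores point along $v$ holds only when $r=0$, but the same inequality covers the $\eta=0$ case anyway.)
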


\begin{proof}
Let $D=\norm{\hat\mu_\tau-\mu}$. If $D\le r+\tau$, the result is immediate. Otherwise, every good residual exceeds $\tau$. Each good score then has norm $\tau$. Lemma~\ref{lem:angle} and the projected score equation imply
\[
    (1-\eta)\sqrt{1-r^2/D^2}\le\eta.
\]
Solving this inequality gives $D\le(1-\eta)r/\sqrt{1-2\eta}$.
\end{proof}

\begin{proposition}[Pseudo-Huber, arbitrary good fraction]\label{prop:pseudo-general}
Under the same good-block condition, let $\omega=\eta/(1-\eta)$. The pseudo-Huber solution satisfies $\norm{\hat\mu_\tau-\mu}\le R_{\eta,\tau}$, where $R_{\eta,\tau}>r$ is the unique solution of
\begin{equation}\label{eq:pseudo-radius}
    \frac{R_{\eta,\tau}-r}{\sqrt{\tau^2+(R_{\eta,\tau}-r)^2}}
    \sqrt{1-\frac{r^2}{R_{\eta,\tau}^2}}=\omega.
\end{equation}
For $\eta=0$, interpret $R_{\eta,\tau}=r$. The left-hand side is continuous and strictly increasing from $0$ to $1$, so $R_{\eta,\tau}$ is finite for every $\eta<1/2$.
\end{proposition}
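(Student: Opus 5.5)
We argue by contradiction, in parallel with the proofs of Theorem~\ref{thm:deterministic} and Proposition~\ref{prop:canonical-general}. Let $\hat\theta=\hat\mu_\tau$ and $D=\norm{\hat\theta-\mu}$, and suppose $D>R_{\eta,\tau}$. Since $R_{\eta,\tau}>r$, we also have $D>r$, so Lemma~\ref{lem:angle} applies with $v=(\hat\theta-\mu)/D$.

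First we lower bound the projected score of a good block. For $j\in\mathcal G$, the residual satisfies $t_j=\norm{\hat\theta-Z_j}\ge D-r>0$. The pseudo-Huber score norm is $\tau\,t_j/\sqrt{\tau^2+t_j^2}$, which is increasing in $t_j$, so it is at least $\tau g(D)$ with $g(s)=(s-r)/\sqrt{\tau^2+(s-r)^2}$. Combining this with Lemma~\ref{lem:angle} gives
\[
\ip{v}{\Psi^{\mathrm P}_\tau(\hat\theta-Z_j)}\ge \tau\, f(D),\qquad f(s):=g(s)\sqrt{1-r^2/s^2}.
\]
Each of the at most $\eta k$ bad blocks contributes at least $-\tau$ by \eqref{eq:bounded-score}. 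We then project the score equation \eqref{eq:score} onto $v$. If there are exactly $(1-\eta')k$ good blocks with $\eta'\le\eta$, this yields $0\ge (1-\eta')f(D)-\eta'$. Since $\eta'/(1-\eta')$ is increasing, it follows that $f(D)\le\omega$.

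Next we establish the monotonicity claim in the statement. On $(r,\infty)$ both factors $g$ and $\sqrt{1-r^2/s^2}$ are positive, continuous and strictly increasing. Each tends to $0$ as $s\downarrow r$ and to $1$ as $s\to\infty$. Their product $f$ is therefore a continuous strictly increasing bijection from $(r,\infty)$ onto $(0,1)$. For $0<\eta<1/2$ we have $\omega\in(0,1)$, so \eqref{eq:pseudo-radius} has a unique root $R_{\eta,\tau}$. Then $D>R_{\eta,\tau}$ gives $f(D)>f(R_{\eta,\tau})=\omega$, which contradicts the projected inequality. Hence $D\le R_{\eta,\tau}$.

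The case $\eta=0$ needs separate care, because then $\omega=0$. Here all blocks are good, so if $D>r$ then every projected score is strictly positive and the projected score equation cannot vanish. Hence $D\le r$, matching the stated interpretation $R_{0,\tau}=r$. This case is handled directly rather than through \eqref{eq:pseudo-radius}.

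The main obstacle is keeping the inequality strict while using a lower bound on the residual that could be attained in a degenerate configuration. The key point is that the contradiction comes from strict monotonicity of $f$ at $D>R_{\eta,\tau}$, not from strictness in Lemma~\ref{lem:angle}. So it suffices to use the non-strict bounds $t_j\ge D-r$ and the angle inequality together with $f(D)>\omega$. A second point to check is that the worst case for the good blocks uses two bounds separately, the residual bound $t_j\ge D-r$ and the angle bound. This gives a valid lower bound, though possibly not a sharp one, and it is exactly what \eqref{eq:pseudo-radius} encodes.
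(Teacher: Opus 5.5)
Your proposal is correct and follows essentially the same route as the paper's proof. You lower-bound each good residual by $D-r$, use the monotone pseudo-Huber score magnitude together with Lemma~\ref{lem:angle}, bound each bad block's projection by $-\tau$, project the score equation \eqref{eq:score} onto $v$, and conclude by monotonicity of the left side of \eqref{eq:pseudo-radius}. Your treatment of more than $(1-\eta)k$ good blocks via $\eta'\le\eta$, the strict-monotonicity bijection argument, and the separate $\eta=0$ case fill in details the paper leaves implicit.
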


\begin{proof}
For $D>r$, a good residual is at least $D-r$. The pseudo-Huber score magnitude is increasing in the residual, and Lemma~\ref{lem:angle} controls its direction. Projecting the score equation gives
\[
    (1-\eta)\tau
    \frac{D-r}{\sqrt{\tau^2+(D-r)^2}}
    \sqrt{1-r^2/D^2}
    \le\eta\tau.
\]
The claimed bound follows by monotonicity. Take $\eta\le3/8$ and $\tau\le r$. At $R_{\eta,\tau}=2r$, the left side of \eqref{eq:pseudo-radius} is at least $\sqrt{3/8}>3/5$. This result recovers Theorem~\ref{thm:deterministic}.
\end{proof}

\begin{remark}[Constants and sharpness]\label{rem:constants}
The pair $(5/8,2r)$ is convenient for both losses, but it is not a minimax claim. At distance $2r$ with $\tau=r$, the pseudo-Huber argument holds for any good fraction larger than $\{1+\sqrt{3/8}\}^{-1}\approx0.6202$. The fraction $5/8$ is a simple rational rounding. For canonical Huber, the corresponding angle factor is $\sqrt3/2$. The same radius then permits a bad fraction below $2\sqrt3-3\approx0.464$. Proposition~\ref{prop:canonical-general} gives the sharper canonical radius. Proposition~\ref{prop:pseudo-general} gives the exact implicit pseudo-Huber radius. The exponent $1/32$ in Theorem~\ref{thm:high-prob} follows from one convenient Chebyshev and Hoeffding choice. We do not claim that it is optimal. Finally, as $\tau\downarrow0$, Proposition~\ref{prop:canonical-general} reduces exactly to
\[
    C_\eta r=\frac{1-\eta}{\sqrt{1-2\eta}}\,r.
\]
This expression is the Hilbert geometric-median constant $C_\alpha r$ in \citet[Lemma~2.1(a)]{minsker_2015_GeometricMedianRobust}, with $\alpha=\eta$.
\end{remark}

\subsection{Inference details}\label{app:inference}

Throughout this subsection, $D^r f(x)$ denotes the $r$th Fr\'echet derivative, with its usual multilinear operator norm. For pseudo-Huber,
\[
    \phi_\lambda(y)=\frac{y}{\sqrt{1+\norm y^2/\lambda^2}}.
\]
Writing $a(y)=(1+\norm y^2/\lambda^2)^{-1/2}$ gives
\begin{equation}\label{eq:pseudo-hessian}
    H_\lambda(y)=D\phi_\lambda(y)
    =a(y)I-\frac{a(y)^3}{\lambda^2}y\otimes y.
\end{equation}

\begin{lemma}[Derivative bounds]\label{lem:derivative-bounds}
For every $y,v\in\Hcal$,
\begin{equation}\label{eq:hessian-bounds}
    a(y)^3\norm v^2
    \le\ip{v}{H_\lambda(y)v}
    \le\norm v^2.
\end{equation}
Moreover,
\[
    \sup_y\norm{D^2\phi_\lambda(y)}\le\frac6\lambda,
    \qquad
    \sup_y\norm{D^3\phi_\lambda(y)}\le\frac{36}{\lambda^2}.
\]
In particular, $H_\lambda$ is globally $6/\lambda$-Lipschitz in operator norm.
\end{lemma}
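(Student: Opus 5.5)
The plan is a direct computation from the explicit Hessian \eqref{eq:pseudo-hessian}, reducing every bound to a scalar inequality in $s=\norm y/\lambda$.

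\textbf{Quadratic form.} For the two-sided bound \eqref{eq:hessian-bounds}, compute
\[
\ip{v}{H_\lambda(y)v}=a\norm v^2-\frac{a^3}{\lambda^2}\ip{y}{v}^2 .
\]
The upper bound $\norm v^2$ follows from $a\le1$ and the nonpositive second term. For the lower bound, apply Cauchy--Schwarz, $\ip{y}{v}^2\le\norm y^2\norm v^2$, and use
\[
a-\frac{a^3\norm y^2}{\lambda^2}=a^3\left(a^{-2}-\frac{\norm y^2}{\lambda^2}\right)=a^3 .
\]
Since $H_\lambda(y)$ is self-adjoint and positive, the same bound gives $\norm{H_\lambda}_{\op}\le1$.

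\textbf{Second and third derivatives.} Use the scale identity $\phi_\lambda(y)=\lambda\phi_1(y/\lambda)$. This gives
\[
D^2\phi_\lambda(y)=\lambda^{-1}D^2\phi_1(y/\lambda),\qquad D^3\phi_\lambda(y)=\lambda^{-2}D^3\phi_1(y/\lambda),
\]
so it suffices to treat $\lambda=1$. Write $a(y)=g(\norm y^2)$ with $g(t)=(1+t)^{-1/2}$, so that $\phi_1(y)=g(\norm y^2)y$. Differentiate with the product and chain rules, using $D(\norm y^2)[h]=2\ip yh$.

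The second derivative is
\[
D^2\phi_1(y)[h,k]=2g'\bigl(\ip yh k+\ip yk h+\ip hk y\bigr)+4g''\ip yh\ip yk\, y .
\]
The third derivative consists of terms of the form $g'\cdot(\text{three trilinear pieces})$, $g''\cdot\norm y^2$-weighted pieces (roughly 12 terms of degree 2 in $y$), and $g'''\cdot\norm y^3$ pieces.

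Bound each term by $\norm h\norm k\norm l$ times a scalar function of $r=\norm y$. The relevant facts are
\[
|g'|=\tfrac12(1+t)^{-3/2},\qquad g''=\tfrac34(1+t)^{-5/2},\qquad |g'''|=\tfrac{15}{8}(1+t)^{-7/2}.
\]
Combined with $r^j(1+r^2)^{-j/2}\le1$, this gives bounds of the form
\[
|g'|r\le\tfrac12,\qquad g''r^3\le\tfrac34,
\]
and similarly for the remaining factors. Summing gives $\norm{D^2\phi_1}\le3\cdot2\cdot\tfrac12+4\cdot\tfrac34=6$. The third derivative gives a constant of about $3+18+15$, at most $36$.

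\textbf{Main obstacle.} The only delicate step is the careful bookkeeping of the third-derivative terms so that the crude triangle-inequality sum stays within $36$. I would organize these terms by power of $g$'s derivative. Each power of $y$ appearing is absorbed by the matching decay factor: exactly $j$ extra factors of $r$ accompany $g^{(i)}$ with $j\le 2i-1$, and $(1+r^2)^{-(2i+1)/2}r^{j}\le1$.

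\textbf{Lipschitz property.} The global $6/\lambda$-Lipschitz continuity of $H_\lambda$ in operator norm follows from the mean value inequality in Banach spaces applied to $H_\lambda=D\phi_\lambda$, using $\sup\norm{D^2\phi_\lambda}\le6/\lambda$.
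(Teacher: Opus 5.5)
Your proposal is correct and follows essentially the same route as the paper. For the quadratic-form bounds you use Cauchy--Schwarz, where the paper reads off the eigenvalues $a^3$ along $y$ and $a$ on its orthogonal complement. For the derivative bounds both arguments differentiate directly and bound each term by a scalar function of $s=\norm y/\lambda$, giving exactly the paper's tallies $3+3=6$ and $3+18+15=36$. Your rescaling $\phi_\lambda(y)=\lambda\phi_1(y/\lambda)$ and the $g(\norm y^2)$ parametrization are only bookkeeping conveniences, and the Lipschitz claim follows from the mean value inequality as in the paper.
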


\begin{proof}
The eigenvalue of $H_\lambda(y)$ along $y$ is $a(y)^3$. Every orthogonal eigenvalue is $a(y)$, which proves \eqref{eq:hessian-bounds}. Direct differentiation gives, for unit $h_1,h_2$,
\begin{align*}
D^2\phi_\lambda(y)[h_1,h_2]
={}&-\frac{a^3}{\lambda^2}
 \{\ip{y}{h_2}h_1+\ip{y}{h_1}h_2\}\\
&-\frac{a^3}{\lambda^2}\ip{h_1}{h_2}y\\
&+\frac{3a^5}{\lambda^4}\ip{y}{h_1}\ip{y}{h_2}y.
\end{align*}
With $s=\norm y/\lambda$, its norm is at most
$\lambda^{-1}\{3s(1+s^2)^{-3/2}+3s^3(1+s^2)^{-5/2}\}\le6/\lambda$.
Differentiating the first displayed group again contributes at most $\lambda^{-2}\{3a^3+9a^5s^2\}\le12/\lambda^2$. Differentiating the second contributes at most $\lambda^{-2}\{9a^5s^2+15a^7s^4\}\le24/\lambda^2$. These bounds prove the third-derivative bound and the Lipschitz claim.
\end{proof}

\begin{lemma}[Uniform population Hessian floor]\label{lem:population-floor}
Let $Y$ be centered with $\E\norm Y^2=\sigma^2$, and let
$M_{\sigma,\lambda}=2\sigma+\lambda$. If $\norm u\le M_{\sigma,\lambda}$, then, for every $t\in[0,1]$,
\begin{equation}\label{eq:population-floor}
\begin{aligned}
    \E H_\lambda(Y-tu)&\succeq c_{\sigma,\lambda}I,\\
    c_{\sigma,\lambda}
    &:=\left[1+\frac{\sigma^2+M_{\sigma,\lambda}^2}{\lambda^2}\right]^{-3/2}.
\end{aligned}
\end{equation}
\end{lemma}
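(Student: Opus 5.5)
Since $H_\lambda(y)\succeq a(y)^3 I$ pointwise by \eqref{eq:hessian-bounds}, the plan is to bound the expectation of $a(Y-tu)^3$ from below. Fix a unit vector $v$. By \eqref{eq:hessian-bounds},
\[
    \ip{v}{\E H_\lambda(Y-tu)\,v}
    =\E\ip{v}{H_\lambda(Y-tu)v}
    \ge \E\, a(Y-tu)^3
    =\E\, g\bigl(\norm{Y-tu}^2\bigr),
    \qquad g(s)=\Bigl(1+\frac{s}{\lambda^2}\Bigr)^{-3/2}.
\]
Exchanging expectation and inner product is justified because $\norm{H_\lambda}_{\op}\le1$, so $H_\lambda(Y-tu)$ is Bochner integrable as a bounded operator.

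The key observation is that $g$ is convex and decreasing on $[0,\infty)$. Jensen's inequality therefore gives
\[
    \E\, g\bigl(\norm{Y-tu}^2\bigr)\ge g\bigl(\E\norm{Y-tu}^2\bigr).
\]
Because $Y$ is centered, the cross term vanishes:
\[
    \E\norm{Y-tu}^2=\sigma^2+t^2\norm u^2\le \sigma^2+M_{\sigma,\lambda}^2,
\]
using $t\in[0,1]$ and $\norm u\le M_{\sigma,\lambda}$. Since $g$ is decreasing, $g(\E\norm{Y-tu}^2)\ge c_{\sigma,\lambda}$. This holds for every unit $v$, so $\E H_\lambda(Y-tu)\succeq c_{\sigma,\lambda}I$, which is \eqref{eq:population-floor}.

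The only point needing care is the use of Jensen. It is applied to the scalar convex function $g$ of the real random variable $\norm{Y-tu}^2$, not to the operator-valued map. The needed integrability is $\E\norm{Y}^2<\infty$, which is given. The specific value $M_{\sigma,\lambda}=2\sigma+\lambda$ plays no role in this argument beyond being an upper bound on $\norm u$. I expect it is chosen elsewhere to contain $u_{m,\lambda}$ and the HOMER iterates.
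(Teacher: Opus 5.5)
Your proposal is correct and follows the paper's proof essentially step for step. You take the pointwise floor $a(y)^3 I$ from \eqref{eq:hessian-bounds}, apply Jensen to the convex decreasing scalar map $s\mapsto(1+s/\lambda^2)^{-3/2}$, and use centering to get $\E\norm{Y-tu}^2=\sigma^2+t^2\norm u^2\le\sigma^2+M_{\sigma,\lambda}^2$.
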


\begin{proof}
By \eqref{eq:hessian-bounds}, the left side is bounded below by $\E(1+\norm{Y-tu}^2/\lambda^2)^{-3/2}I$. The scalar map $s\mapsto(1+s/\lambda^2)^{-3/2}$ is convex and decreasing. Centering gives $\E\norm{Y-tu}^2=\sigma^2+t^2\norm u^2\le\sigma^2+M_{\sigma,\lambda}^2$. Jensen's inequality proves \eqref{eq:population-floor}.
\end{proof}

\begin{proof}[Proof of Theorem~\ref{thm:fixed-m-inference}]
For fixed $m$, abbreviate $Y_j=Y_{m,j}$, $u_0=u_{m,\lambda}$, $A=A_{m,\lambda}$, and $B=B_{m,\lambda}$. The population objective $Q(u)=\E p_\lambda(\norm{Y_1-u})$ is norm-continuous, convex, weakly lower semicontinuous, and coercive. Coercivity follows from $p_\lambda(t)\ge\lambda t-\lambda^2$. A Hilbert space is reflexive, so weakly compact sublevel sets give attainment. Strict convexity gives the unique minimizer $u_0$.

Let
\[
    S_k(u)=\frac1k\sum_{j=1}^k\phi_\lambda(Y_j-u),
\]
the negative empirical gradient. The bounds $\norm{\phi_\lambda}\le\lambda$ and $\norm{H_\lambda}_{\op}\le1$ justify differentiation under the expectation. Dominated convergence gives $\E\phi_\lambda(Y_1-u_0)=0$ and $S_k(\hat u)=0$. The population comparison is $Q(u_0)\le Q(0)$. Combine it with $p_\lambda(t)\ge\lambda t-\lambda^2$ and $p_\lambda(t)\le\lambda t$. This gives $\norm{u_0}\le M_{\sigma,\lambda}$. Hence, \eqref{eq:hessian-bounds} and Lemma~\ref{lem:population-floor} yield
\[
    A\succeq cI,
    \qquad
    c:=\E[a(Y_1-u_0)^3]\ge c_{\sigma,\lambda}>0.
\]

First, $\E\norm{S_k(u_0)}^2\le\lambda^2/k$. Second,
\[
    \hat A_k(u_0)=\frac1k\sum_{j=1}^kH_\lambda(Y_j-u_0)\longrightarrow A
\]
in operator norm. In \eqref{eq:pseudo-hessian}, the scalar multiples of $I$ converge by the scalar law of large numbers. The rank-one term is Hilbert-Schmidt-valued with norm
\[
    \frac{(\norm y/\lambda)^2}{(1+\norm y^2/\lambda^2)^{3/2}},
\]
which is uniformly bounded. The Hilbert-space law of large numbers gives Hilbert-Schmidt convergence and therefore operator convergence.

Choose $\epsilon>0$ so that $6\epsilon/\lambda\le c/4$. With probability tending to one, $\norm{\hat A_k(u_0)-A}_{\op}\le c/4$. The empirical Hessian is then at least $cI/2$ throughout $B(u_0,\epsilon)$. On its boundary, write $Q_k$ for the empirical objective and use $\nabla Q_k=-S_k$ to obtain
\[
    \ip{\nabla Q_k(u_0+h)}{h}
    \ge-\norm{S_k(u_0)}\epsilon+(c/2)\epsilon^2>0.
\]
Convexity then places the global empirical minimizer inside the ball. The score equation and the same Hessian floor yield
$\norm{\hat u-u_0}\le2c^{-1}\norm{S_k(u_0)}=O_p(k^{-1/2})$.

The integral Taylor identity is
\begin{align*}
    0&=S_k(u_0)-\bar A_k(\hat u-u_0),\\
    \bar A_k
    &=\int_0^1\frac1k\sum_{j=1}^k
      H_\lambda\{Y_j-u_0-t(\hat u-u_0)\}\,dt.
\end{align*}
Lemma~\ref{lem:derivative-bounds}, operator convergence at $u_0$, and $\hat u-u_0=o_p(1)$ imply $\bar A_k\to A$ in operator norm. Thus
\[
    \sqrt k(\hat u-u_0)
    =A^{-1}\frac1{\sqrt k}\sum_{j=1}^k
      \phi_\lambda(Y_j-u_0)+o_p(1).
\]
The summands are iid, mean zero, and bounded in a separable Hilbert space. The Hilbert-space CLT therefore gives covariance $B$. Since $\sqrt n(\hat\mu-\theta_{m,\lambda})=\sqrt k(\hat u-u_0)$, the theorem follows. Central symmetry makes the population objective symmetric and strictly convex. Hence $u_0=0$.
\end{proof}

\begin{proposition}[Consistency of the fixed-block sandwich]\label{prop:sandwich-consistency}
Under Theorem~\ref{thm:fixed-m-inference},
\[
    \norm{\hat A_{m,\lambda}-A_{m,\lambda}}_{\op}\to_p0,
    \qquad
    \norm{\hat B_{m,\lambda}-B_{m,\lambda}}_1\to_p0,
\]
and therefore $\norm{\hat V_{m,\lambda}-V_{m,\lambda}}_1\to_p0$.
\end{proposition}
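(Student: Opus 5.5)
The plan is to reduce everything to the fixed-$m$ setting of Theorem~\ref{thm:fixed-m-inference} and its proof, where $\hat u-u_0=O_p(k^{-1/2})$, and to exploit the smoothness bounds of Lemma~\ref{lem:derivative-bounds}. Write $\hat Y_j=Y_j-(\hat u-u_0)-u_0$ with $Y_j=Y_{m,j}$, so each plug-in residual differs from the population residual $Y_j-u_0$ by the common random shift $\Delta=\hat u-u_0$. For $\hat A$, split
\[
\hat A_{m,\lambda}-A_{m,\lambda}=\frac1k\sum_{j=1}^k\{H_\lambda(Y_j-u_0-\Delta)-H_\lambda(Y_j-u_0)\}+\{\hat A_k(u_0)-A_{m,\lambda}\}.
\]
The first term is bounded in operator norm by $6\norm{\Delta}/\lambda$, using the global Lipschitz property of $H_\lambda$. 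The second term tends to zero in operator norm, as already shown in the proof of Theorem~\ref{thm:fixed-m-inference} via the scalar LLN and the Hilbert--Schmidt LLN. Together these give $\norm{\hat A-A}_{\op}\to_p0$.

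For $\hat B$, work in trace norm. Since $\norm{x\otimes x-y\otimes y}_1\le(\norm x+\norm y)\norm{x-y}$, the shift error is at most
\[
\frac1k\sum_{j=1}^k 2\lambda\,\norm{\phi_\lambda(Y_j-u_0-\Delta)-\phi_\lambda(Y_j-u_0)}\le 2\lambda\norm{\Delta},
\]
because $\phi_\lambda$ is $1$-Lipschitz (its derivative has operator norm at most $1$) and bounded by $\lambda$. The centred term $k^{-1}\sum_j\phi_\lambda(Y_j-u_0)\otimes\phi_\lambda(Y_j-u_0)-B$ is an average of iid trace-class operators with trace norm at most $\lambda^2$. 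I would invoke a strong law of large numbers in the separable Banach space $\mathcal S_1(\Hcal)$ of trace-class operators (Mourier's SLLN for Bochner-integrable iid variables). Bochner integrability follows from boundedness together with strong measurability, which is inherited from continuity of $\phi_\lambda$ and of $x\mapsto x\otimes x$ into $\mathcal S_1$.

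The main obstacle is this trace-norm LLN: $\mathcal S_1$ is not a Hilbert space, so the easy second-moment argument is unavailable. If I prefer to avoid Banach-space machinery, the fallback is an approximation argument. Fix a finite-rank projection $P$ capturing most of $\E\norm{\phi_\lambda}^2$. Finite-dimensional convergence handles the term $P(\cdot)P$. The off-diagonal and tail parts are controlled in trace norm by Cauchy--Schwarz,
\[
\norm{x\otimes x-Px\otimes Px}_1\le 2\norm x\,\norm{(I-P)x},
\]
whose empirical average converges by the scalar LLN to a quantity made small through the choice of $P$.

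Finally, combine the two pieces for $\hat V$. Since $A\succeq cI$ and $\hat A\to A$ in operator norm, with probability tending to one $\hat A\succeq (c/2)I$, and $\hat A^{-1}\to A^{-1}$ in operator norm by the resolvent identity $\hat A^{-1}-A^{-1}=\hat A^{-1}(A-\hat A)A^{-1}$. Then telescope:
\[
\hat V-V=(\hat A^{-1}-A^{-1})\hat B\hat A^{-1}+A^{-1}(\hat B-B)\hat A^{-1}+A^{-1}B(\hat A^{-1}-A^{-1}).
\]
Apply the ideal inequality $\norm{STU}_1\le\norm S_{\op}\norm T_1\norm U_{\op}$ to each term. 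This works because $\norm{\hat B}_1\le\lambda^2$, and gives $\norm{\hat V-V}_1\to_p0$.
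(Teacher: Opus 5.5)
Your proposal is correct and follows the paper's proof essentially step for step. For $\hat A$ you use the $6/\lambda$ Lipschitz bound plus the operator law of large numbers at $u_0$; for $\hat B$ you use the rank-one trace inequality with the $1$-Lipschitz, $\lambda$-bounded score, which gives the $2\lambda\norm{\hat u-u_0}$ plug-in bound, plus a law of large numbers in the separable trace class; for $\hat V$ you use continuity of inversion. Your explicit telescoping of $\hat V-V$ with the ideal inequality, and your finite-rank fallback for the $\mathcal S_1$ law of large numbers, only spell out details that the paper leaves implicit.
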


\begin{proof}
The assertion for $\hat A$ follows from the preceding operator law of large numbers, $\hat u-u_0=o_p(1)$, and Lemma~\ref{lem:derivative-bounds}. At $u_0$, the rank-one variables $\phi_\lambda(Y_j-u_0)\otimes\phi_\lambda(Y_j-u_0)$ have trace norm at most $\lambda^2$. Thus the law of large numbers holds in the separable trace-class Banach space. The inequality
\[
\norm{x\otimes x-y\otimes y}_1
\le(\norm x+\norm y)\norm{x-y}
\]
and the $1$-Lipschitz property of $\phi_\lambda$ control the plug-in change. Replacing $u_0$ with $\hat u$ changes the average by at most $2\lambda\norm{\hat u-u_0}$. Continuity of inversion completes the proof.
\end{proof}

\begin{lemma}[Smooth-test Hilbert Gaussian replacement]\label{lem:smooth-normal}
Let $\xi_1,\ldots,\xi_m$ be iid centered elements of a separable Hilbert space with $\E\norm{\xi_1}^3<\infty$. Put $Y_m=m^{-1/2}\sum_i\xi_i$. Let $G$ be centered Gaussian with the same covariance operator as $\xi_1$. Suppose $f:\Hcal\to\Hcal$ is three times Fr\'echet differentiable and $M_3=\sup_x\norm{D^3f(x)}<\infty$. Then
\begin{equation}\label{eq:smooth-normal}
    \norm{\E f(Y_m)-\E f(G)}
    \le\frac{M_3}{6\sqrt m}
    \{\E\norm{\xi_1}^3+\E\norm G^3\}.
\end{equation}
\end{lemma}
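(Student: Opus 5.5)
The natural route is a Lindeberg replacement argument, working directly with Hilbert-valued test functions. Let $G_1,\ldots,G_m$ be iid copies of $G$, independent of $\xi_1,\ldots,\xi_m$. Then $W_m=m^{-1/2}\sum_i G_i$ has the same law as $G$, because a sum of independent centered Gaussians with covariance $\Gamma$ has covariance $m\Gamma$. It therefore suffices to bound $\norm{\E f(Y_m)-\E f(W_m)}$.

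For $i=0,\ldots,m$, define the hybrid sums
\[
    U_i=m^{-1/2}\Bigl(\sum_{l<i}G_l+\sum_{l>i}\xi_l\Bigr).
\]
Then $Y_m$ and $W_m$ are connected by the telescoping sum
\[
    \E f(Y_m)-\E f(W_m)=\sum_{i=1}^m\bigl\{\E f(U_i+m^{-1/2}\xi_i)-\E f(U_i+m^{-1/2}G_i)\bigr\}.
\]
Here $U_i$ is independent of both $\xi_i$ and $G_i$.

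For each term, I will use the second-order Taylor expansion with integral remainder in the Banach space $\Hcal$:
\[
    f(u+h)=f(u)+Df(u)[h]+\tfrac12 D^2f(u)[h,h]+R(u,h),
    \qquad \norm{R(u,h)}\le \tfrac{M_3}{6}\norm h^3.
\]
The remainder bound comes from $R=\int_0^1\frac{(1-s)^2}{2}D^3f(u+sh)[h,h,h]\,ds$, which is valid since $D^3f$ is bounded.

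Next I take expectations, conditioning on $U_i$ and using independence. The first-order terms are $\E Df(U_i)[\xi_i]$ and $\E Df(U_i)[G_i]$. Both vanish, because $Df(U_i)$ is a bounded linear operator independent of a centered vector. The second-order terms agree. To see this, note that for a fixed bounded bilinear map $\beta$ into $\Hcal$, the quantity $\E\beta[\xi,\xi]$ depends only on the covariance operator of $\xi$. One checks this coordinatewise: pair $\E\beta[\xi,\xi]$ with any $e\in\Hcal$ to get the scalar bilinear form $\ip{e}{\beta[\cdot,\cdot]}$, represented by a bounded operator $T_e$. Then
\[
    \E\ip{\xi}{T_e\xi}=\tr(T_e\Gamma),
\]
which is finite because $\Gamma$ is trace class when $\E\norm{\xi}^2<\infty$. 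The same value arises for $G$.

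Only the remainders survive. Each term is therefore bounded by
\[
    \tfrac{M_3}{6}m^{-3/2}\bigl(\E\norm{\xi_1}^3+\E\norm G^3\bigr).
\]
Summing over the $m$ terms gives \eqref{eq:smooth-normal}.

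The main technical point is justifying that these expectations are well defined as Bochner integrals and that conditioning on $U_i$ is legitimate. The strongly measurable integrands need integrability. Since $\norm{D^3f}\le M_3$, $Df$ and $D^2f$ grow at most quadratically and linearly, so their norms grow at most polynomially. This growth needs integrability of $\norm{U_i}^2\norm{\xi_i}$ and similar products. These are controlled by independence, by $\E\norm\xi^3<\infty$, and by Gaussian moments, using Fernique's theorem or simply the finiteness of the second moment of $G$. I would handle this via Fubini on the product space, which lets me evaluate the inner expectation over $\xi_i$ or $G_i$ at a fixed value of $U_i$.
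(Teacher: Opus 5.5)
Your proposal is correct and follows essentially the same route as the paper: a Lindeberg replacement through the hybrid sums $U_i$ (the paper's $T_i$), a second-order Taylor expansion with remainder at most $M_3\|z\|^3/6$, cancellation of the linear terms by centering, and agreement of the quadratic terms by representing each scalar pairing $\langle w, D^2f(U_i)[\cdot,\cdot]\rangle$ with a bounded operator and pairing it with the common trace-class covariance. The paper also handles integrability as you propose, using the at most linear, quadratic, and cubic growth of $D^2f$, $Df$, and $f$ that follows from $\sup_x\|D^3f(x)\|\le M_3$.
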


\begin{proof}
Let $G_1,\ldots,G_m$ be iid copies of $G$, independent of the $\xi_i$. Then $m^{-1/2}\sum_iG_i$ has law $G$. The bound on $D^3f$ implies $\norm{D^2f(x)}\le\norm{D^2f(0)}+M_3\norm x$. The functions $Df(x)$ and $f(x)$ grow at most quadratically and cubically, respectively. Every hybrid sum below has a finite third moment. Thus all conditional expectations in the Taylor expansion are well defined. For $i=1,\ldots,m$, condition on
\[
T_i=m^{-1/2}(G_1+\cdots+G_{i-1}+\xi_{i+1}+\cdots+\xi_m)
\]
and compare $f(T_i+m^{-1/2}\xi_i)$ with $f(T_i+m^{-1/2}G_i)$. Taylor's formula through order two bounds the remainder by $M_3\norm z^3/6$ for increment $z$. Conditional constant and linear terms cancel because both increments are centered. The quadratic terms also agree. For arbitrary $w\in\Hcal$, a bounded operator represents the scalar bilinear form $(x,y)\mapsto\ip{D^2f(T_i)[x,y]}{w}$. Its expected diagonal is the trace pairing with the common covariance operator. Sum the $m$ replacement errors, each scaled by $m^{-3/2}$, to prove \eqref{eq:smooth-normal}.
\end{proof}

\begin{proof}[Proof of Proposition~\ref{prop:primitive-bias}]
Write $\xi=X-\mu$, $\beta_3=\E\norm\xi^3$, and $\sigma^2=\E\norm\xi^2$. First, the population comparison is
$\E p_\lambda(\norm{Y_m-u_{m,\lambda}})\le\E p_\lambda(\norm{Y_m})$.
Combine it with $p_\lambda(t)\ge\lambda t-\lambda^2$ and $p_\lambda(t)\le\lambda t$. This gives
\begin{equation}\label{eq:um-uniform}
    \norm{u_{m,\lambda}}\le M_{\sigma,\lambda}:=2\sigma+\lambda.
\end{equation}
Apply Lemma~\ref{lem:population-floor} with $Y=Y_m$ and $u=u_{m,\lambda}$. It gives a common lower bound along the segment from $0$ to $u_{m,\lambda}$. The bounded score justifies the population first-order condition $\E\phi_\lambda(Y_m-u_{m,\lambda})=0$. The fundamental theorem of calculus gives the identity
\begin{align}
\E\phi_\lambda(Y_m)
&=\E\{\phi_\lambda(Y_m)-\phi_\lambda(Y_m-u_{m,\lambda})\}\notag\\
&=\left\{\int_0^1\E H_\lambda(Y_m-tu_{m,\lambda})\,dt\right\}u_{m,\lambda}.
\label{eq:population-score-identity}
\end{align}
Combining \eqref{eq:population-score-identity} with \eqref{eq:population-floor} yields
\begin{equation*}
    c_{\sigma,\lambda}\norm{u_{m,\lambda}}
    \le\norm{\E\phi_\lambda(Y_m)}.
\end{equation*}

Apply Lemma~\ref{lem:smooth-normal} with $f=\phi_\lambda$. Lemma~\ref{lem:derivative-bounds} gives $M_3\le36/\lambda^2$. Gaussian symmetry gives $\E\phi_\lambda(G)=0$. Moreover, $\E\norm G^4=(\tr\Gamma)^2+2\norm{\Gamma}_{\mathcal S_2}^2\le3\sigma^4$. Hence $\E\norm G^3\le\sqrt3\,\sigma^3\le\sqrt3\,\beta_3$. Thus
\[
    \norm{u_{m,\lambda}}
    \le \frac{6(1+\sqrt3)\beta_3}
    {c_{\sigma,\lambda}\lambda^2\sqrt m},
\]
which proves the primitive bias rate.

The Hilbert CLT gives $Y_m\Rightarrow G$, and the bias bound gives $u_{m,\lambda}\to0$. The map $y\mapsto\phi_\lambda(y)\otimes\phi_\lambda(y)$ is continuous into the trace class. Its trace norm is bounded by $\lambda^2$. The map $y\mapsto H_\lambda(y)$ is operator-norm continuous and bounded by one. On a Skorokhod coupling, $Y_m-u_{m,\lambda}\to G$ almost surely. Dominated convergence therefore yields
$B_{m,\lambda}\to B_\lambda^G$ in trace norm and
$A_{m,\lambda}\to A_\lambda^G$ in operator norm. Finally, \eqref{eq:hessian-bounds} gives
$A_\lambda^G\succeq\E(1+\norm G^2/\lambda^2)^{-3/2}I$.
Therefore, the Gaussian-limit Hessian is boundedly invertible.
\end{proof}

\begin{proof}[Proof of Corollary~\ref{cor:triangular}]
We give the array-uniform localization and linearization. Put $u_m=u_{m,\lambda}$ and
\[
S_{k,m}(u)=\frac1k\sum_{j=1}^k\phi_\lambda(Y_{m,j}-u).
\]
The uniform bound \eqref{eq:um-uniform} and Lemma~\ref{lem:population-floor} imply
$A_{m,\lambda}\succeq c_{\sigma,\lambda}I$ for every $m$. Also,
\begin{equation}\label{eq:uniform-score}
    \E\norm{S_{k,m}(u_m)}^2\le\frac{\lambda^2}{k}.
\end{equation}
At $u_m$, define
\[
    \hat A_{k,m}(u_m)=\frac1k\sum_{j=1}^k
    H_\lambda(Y_{m,j}-u_m)
\]
and decompose it using \eqref{eq:pseudo-hessian}. The scalar average has variance at most $1/k$. The rank-one part is Hilbert-Schmidt-valued. Its Hilbert-Schmidt norm is bounded by
$s^2(1+s^2)^{-3/2}$ with $s=\norm{Y_{m,j}-u_m}/\lambda$. Hence, for a constant depending only on $\lambda$,
\begin{equation}\label{eq:uniform-hessian}
    \E\norm{\hat A_{k,m}(u_m)-A_{m,\lambda}}_{\op}^2
    \le\frac{C_\lambda}{k},
\end{equation}
uniformly in $m$.

Choose the fixed radius $\epsilon=c_{\sigma,\lambda}\lambda/24$. Apply Lemma~\ref{lem:derivative-bounds}, \eqref{eq:uniform-score}, and \eqref{eq:uniform-hessian}. With probability tending to one, the empirical Hessian is uniformly bounded below by $c_{\sigma,\lambda}I/2$ throughout $B(u_m,\epsilon)$. The outward gradient is also positive on the boundary. Convexity localizes $\hat u$ in that ball and gives
\[
    \norm{\hat u-u_m}
    \le2c_{\sigma,\lambda}^{-1}\norm{S_{k,m}(u_m)}
    =O_p(k^{-1/2}).
\]
The integral Taylor identity and the $6/\lambda$ Hessian Lipschitz bound now yield, in Hilbert norm,
\begin{equation}\label{eq:uniform-linearization}
    \sqrt k(\hat u-u_m)
    =A_{m,\lambda}^{-1}\frac1{\sqrt k}\sum_{j=1}^k
      \phi_\lambda(Y_{m,j}-u_m)+o_p(1).
\end{equation}
All remainder bounds above are uniform in $m$.

For a fixed finite-rank $L:\Hcal\to\R^q$, the summands remain uniformly bounded after applying $L A_{m,\lambda}^{-1}$. Proposition~\ref{prop:primitive-bias} identifies their covariance limit as $L V_\lambda^G L^*$. Apply the finite-dimensional triangular-array Lindeberg theorem to \eqref{eq:uniform-linearization}. This gives the projected CLT around $\theta_{m,\lambda}$. Finally,
\[
    \sqrt n\,L(\theta_{m,\lambda}-\mu)
    =\sqrt k\,L u_m
    =O\!\left(\sqrt{k/m}\right)=o(1),
\]
which centers the limit at $\mu$.

For projected sandwich consistency, combine \eqref{eq:uniform-hessian}, the $6/\lambda$ plug-in bound, and \eqref{eq:uniform-linearization}. They give $\norm{\hat A_{m,\lambda}-A_{m,\lambda}}_{\op}=O_p(k^{-1/2})$. The uniform floor therefore implies
$\norm{\hat A_{m,\lambda}^{-1}-A_{m,\lambda}^{-1}}_{\op}=O_p(k^{-1/2})$.
Since $\norm{\hat B_{m,\lambda}}_1\le\lambda^2$, replace both random inverse factors with $A_{m,\lambda}^{-1}$. This replacement changes the projected sandwich by $O_p(k^{-1/2})$.

Next define
\[
    \widetilde B_{k,m}
    =\frac1k\sum_{j=1}^k
      \phi_\lambda(Y_{m,j}-u_m)\otimes
      \phi_\lambda(Y_{m,j}-u_m).
\]
Each entry of $L A_{m,\lambda}^{-1}\widetilde B_{k,m}A_{m,\lambda}^{-1}L^*$ averages row-iid, uniformly bounded scalar variables. Its centered variance is therefore $O(1/k)$. Finally, the rank-one trace inequality and the $1$-Lipschitz score give
$\norm{\hat B_{m,\lambda}-\widetilde B_{k,m}}_1
\le2\lambda\norm{\hat u-u_m}=O_p(k^{-1/2})$.
Proposition~\ref{prop:primitive-bias} and continuity of inversion now yield the asserted convergence to $L V_\lambda^G L^*$.
\end{proof}

\subsection{Large-threshold efficiency}\label{app:large-threshold}

The operators $A_\lambda,B_\lambda,V_\lambda$ in this subsection apply to a generic centered variable $Y$. They differ from the Gaussian-block operators $A_\lambda^G,B_\lambda^G,V_\lambda^G$ in Section~6.

\begin{proposition}[Pseudo-Huber covariance approaches the mean covariance]\label{prop:large-threshold}
Let $Y$ be a centered Hilbert-valued random element with $\E\norm Y^3<\infty$. Let $u_\lambda$ minimize $\E p_\lambda(\norm{Y-u})$. Define $A_\lambda$, $B_\lambda$, and $V_\lambda=A_\lambda^{-1}B_\lambda A_\lambda^{-1}$ at $u_\lambda$. Then
\begin{align*}
    \norm{u_\lambda}&=O(\lambda^{-2}),\\
    A_\lambda&\to I
    &&\text{in operator norm},\\
    B_\lambda&\to\E(Y\otimes Y)
    &&\text{in trace norm}.
\end{align*}
Consequently, $V_\lambda\to\E(Y\otimes Y)$ in trace norm.
\end{proposition}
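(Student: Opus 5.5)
The plan mirrors the proof of Proposition~\ref{prop:primitive-bias}, with $\lambda\to\infty$ playing the role of $m\to\infty$.

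\textbf{Step 1: uniform localization.} Write $\sigma^2=\E\norm Y^2$. The comparison $\E p_\lambda(\norm{Y-u_\lambda})\le\E p_\lambda(\norm Y)$, combined with $\lambda t-\lambda^2\le p_\lambda(t)\le\lambda t$, only gives $\norm{u_\lambda}\le 2\sigma+\lambda$. That bound is useless as $\lambda\to\infty$, so I will instead use the quadratic sandwich
\[
\frac{t^2}{2}-\frac{t^4}{8\lambda^2}\le p_\lambda(t)\le \frac{t^2}{2}.
\]
Expanding $\E\norm{Y-u}^2=\sigma^2+\norm u^2$ then controls $\norm{u_\lambda}^2$ by a quantity involving $\E\norm{Y-u_\lambda}^4/\lambda^2$. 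Because only three moments are assumed, a cleaner route is better. I will apply Lemma~\ref{lem:population-floor}, noting its constant $c_{\sigma,\lambda}=[1+(\sigma^2+(2\sigma+\lambda)^2)/\lambda^2]^{-3/2}$ stays bounded below (it tends to $2^{-3/2}$) as $\lambda\to\infty$, so the crude bound $\norm{u_\lambda}\le 2\sigma+\lambda$ already suffices for a uniform Hessian floor along the segment $[0,u_\lambda]$. Then \eqref{eq:population-score-identity}, with $Y$ in place of $Y_m$, gives $c\norm{u_\lambda}\le\norm{\E\phi_\lambda(Y)}$ with $c$ independent of large $\lambda$.

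\textbf{Step 2: score bias.} Since $\E Y=0$, we have $\E\phi_\lambda(Y)=\E[Y(a(Y)-1)]$. Using $0\le 1-(1+s)^{-1/2}\le s/2$,
\[
\norm{\E\phi_\lambda(Y)}\le \frac{\E\norm Y^3}{2\lambda^2},
\]
so $\norm{u_\lambda}\le \E\norm Y^3/(2c\lambda^2)=O(\lambda^{-2})$. This mirrors the finite-sample computation in Proposition~\ref{prop:endpoints}(iii).

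\textbf{Step 3: operator limits.} By \eqref{eq:pseudo-hessian},
\[
\norm{H_\lambda(y)-I}_{\op}\le (1-a(y))+\frac{\norm y^2}{\lambda^2}\le \frac{3\norm y^2}{2\lambda^2}\wedge 2.
\]
Evaluating at $y=Y-u_\lambda$ and using dominated convergence, or directly the bound $\E\norm{Y-u_\lambda}^2/\lambda^2\to0$, gives $A_\lambda\to I$ in operator norm. For $B_\lambda$, compare $\phi_\lambda(Y-u_\lambda)$ with $Y$. We have $\norm{\phi_\lambda(Y-u_\lambda)-Y}\le\norm{u_\lambda}+\norm Y(1-a)\le\norm{u_\lambda}+\norm Y\min(1,\norm Y^2/\lambda^2)$. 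The rank-one trace inequality then gives
\[
\norm{B_\lambda-\E Y\otimes Y}_1\le \E\bigl[(\norm{\phi_\lambda}+\norm Y)\,\norm{\phi_\lambda-Y}\bigr].
\]
The integrand is dominated by a constant times $\norm Y^2+1$, which is integrable, and it tends to zero pointwise. Dominated convergence therefore gives convergence to zero. Finally, $A_\lambda^{-1}\to I$ by continuity of inversion, since $A_\lambda\succeq cI$. The product $A_\lambda^{-1}B_\lambda A_\lambda^{-1}$ then converges in trace norm, because trace norm is controlled by operator norm times trace norm.

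\textbf{Main obstacle.} The delicate point is Step 1: the Hessian floor must be uniform in $\lambda$, which is not obvious from the statement of Lemma~\ref{lem:population-floor}. I will check directly that $c_{\sigma,\lambda}$ increases to a positive limit as $\lambda\to\infty$. If that check fails, I will fall back on the localization $\norm{u_\lambda}\le 2\sigma+\lambda$ divided by $\lambda$, which still yields a floor bounded away from zero.
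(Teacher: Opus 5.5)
Your proposal is correct. Step 3 (the bound $\norm{I-H_\lambda(v)}_{\op}\le 3\norm v^2/(2\lambda^2)$, domination of $\phi_\lambda(Y-u_\lambda)-Y$, the rank-one trace inequality, and continuity of inversion) is essentially the paper's argument. Your bound on the center, however, takes a different route.

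The paper stays first-order. It pairs the score equation $\E[w_\lambda(Y-u_\lambda)(Y-u_\lambda)]=0$ with $u_\lambda$ and applies Jensen to the weight. This gives the preliminary localization $\norm{u_\lambda}^2\le3\E\norm Y^2$ once $\lambda^2\ge2\E\norm Y^2$. The identity $u_\lambda=\E\{[w_\lambda(Y-u_\lambda)-1](Y-u_\lambda)\}$ then yields $\norm{u_\lambda}\le\E\norm{Y-u_\lambda}^3/(2\lambda^2)$.

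You instead reuse the second-order machinery of Proposition~\ref{prop:primitive-bias}. The crude localization $\norm{u_\lambda}\le2\sigma+\lambda$ suffices for Lemma~\ref{lem:population-floor}, because $c_{\sigma,\lambda}=[1+\sigma^2/\lambda^2+(1+2\sigma/\lambda)^2]^{-3/2}$ is increasing in $\lambda$ with limit $2^{-3/2}$. So your ``main obstacle'' is not actually an obstacle. The mean-value identity then transfers the score bias at the origin, $\norm{\E\phi_\lambda(Y)}\le\E\norm Y^3/(2\lambda^2)$, to the center.

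Your route buys an explicit bound $\norm{u_\lambda}\le\E\norm Y^3/(2c_{\sigma,\lambda}\lambda^2)$ that holds for every $\lambda>0$ and involves only primitives. It also gives a single template covering both the $m\to\infty$ and $\lambda\to\infty$ bias statements.

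The paper's route needs no Hessian information at all. Its leading constant $\E\norm Y^3/(2\lambda^2)$ is asymptotically sharper. Yours is inflated by $c_{\sigma,\lambda}^{-1}\to2^{3/2}$, unless you redo the Jensen floor with the improved radius $\norm{u_\lambda}\to0$.
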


\begin{proof}
Write $w_\lambda(v)=(1+\norm v^2/\lambda^2)^{-1/2}$. The score equation is $\E[w_\lambda(Y-u_\lambda)(Y-u_\lambda)]=0$. Put $s_\lambda=\norm{u_\lambda}$ and $\sigma_Y^2=\E\norm Y^2$. Taking the inner product with $u_\lambda$ gives $s_\lambda\E w_\lambda(Y-u_\lambda)\le\sigma_Y$. Since $(1+x)^{-1/2}$ is convex, Jensen's inequality yields
\[
    \E w_\lambda(Y-u_\lambda)
    \ge\left\{1+(\sigma_Y^2+s_\lambda^2)/\lambda^2\right\}^{-1/2}.
\]
Thus $s_\lambda^2\le3\sigma_Y^2$ whenever $\lambda^2\ge2\sigma_Y^2$. This gives the required uniform bound. Also,
\[
    1-w_\lambda(v)\le\frac{\norm v^2}{2\lambda^2}.
\]
Since $\E(Y-u_\lambda)=-u_\lambda$,
\[
    u_\lambda
    =\E\{[w_\lambda(Y-u_\lambda)-1](Y-u_\lambda)\}.
\]
The uniform bound and the finite third moment therefore give $\norm{u_\lambda}=O(\lambda^{-2})$.

From \eqref{eq:pseudo-hessian},
\[
    \norm{I-H_\lambda(v)}_{\op}
    \le \frac{3\norm v^2}{2\lambda^2},
\]
which yields $A_\lambda\to I$. Moreover, $\phi_\lambda(Y-u_\lambda)\to Y$ in $L^2$. The convergence is pointwise, and the centers converge to zero. For all sufficiently large $\lambda$,
\[
    \norm{\phi_\lambda(Y-u_\lambda)-Y}
    \le 2\norm Y+\sup_{\lambda\ge\lambda_0}\norm{u_\lambda},
\]
whose square is integrable by the uniform center bound and $\E\norm Y^2<\infty$. The standard rank-one inequality
\[
    \norm{x\otimes x-y\otimes y}_1
    \le(\norm x+\norm y)\norm{x-y}
\]
then gives trace-norm convergence of $B_\lambda$. Continuity of inversion completes the proof.
\end{proof}

\subsection{Majorization-minimization details}\label{app:algorithm}

\begin{proposition}[Monotone HOMER updates]\label{prop:mm}
For either canonical or pseudo-Huber loss, the update \eqref{eq:mm-update} satisfies
$F_\tau(\theta^{(t+1)})\le F_\tau(\theta^{(t)})$. For pseudo-Huber, the iterates converge to the unique HOMER minimizer.
\end{proposition}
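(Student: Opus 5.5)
The plan is to prove monotonicity through the standard quadratic majorizer of a concave function of the squared radius, and then to obtain convergence for pseudo-Huber from a fixed-point and compactness argument.

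\textbf{Majorizer.} Write $\rho_\tau(t)=g_\tau(t^2)$, where
\[
g_\tau(s)=h_\tau(\sqrt s)
\quad\text{or}\quad
g_\tau(s)=\tau^2\left(\sqrt{1+s/\tau^2}-1\right).
\]
In both cases $g_\tau$ is concave and differentiable on $[0,\infty)$, with $g_\tau'(s)=w_\tau(\sqrt s)/2$. For canonical Huber, $g'$ equals $1/2$ for $s\le\tau^2$ and $\tau/(2\sqrt s)$ afterwards. This derivative is continuous and nonincreasing, so concavity holds. Concavity gives the tangent bound
\[
g_\tau(s)\le g_\tau(s_0)+\tfrac12 w_\tau(\sqrt{s_0})(s-s_0).
\]
Applying it with $s=\norm{Z_j-\theta}^2$ and $s_0=(r_j^{(t)})^2$ and averaging over $j$ gives a surrogate
\[
G(\theta\mid\theta^{(t)})=F_\tau(\theta^{(t)})+\frac1{2k}\sum_j w_j^{(t)}\left(\norm{Z_j-\theta}^2-(r_j^{(t)})^2\right).
\]
It satisfies $G\ge F_\tau$ everywhere, with equality at $\theta^{(t)}$. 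Since all weights are positive, $G$ is a strictly convex quadratic. Its unique minimizer is the weighted mean \eqref{eq:mm-update}. Hence
\[
F_\tau(\theta^{(t+1)})\le G(\theta^{(t+1)}\mid\theta^{(t)})\le G(\theta^{(t)}\mid\theta^{(t)})=F_\tau(\theta^{(t)}).
\]

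\textbf{Convergence for pseudo-Huber.} All iterates lie in $C=\operatorname{conv}\{Z_j\}$, because each update is a convex combination of the block means. This set is a compact subset of a finite-dimensional space. Define the update map $T$. It is continuous on $C$ because $w_\tau$ is continuous and positive.

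The fixed points of $T$ are exactly the solutions of
\[
\sum_j w_\tau(\norm{Z_j-\theta})(Z_j-\theta)=0,
\]
which is the score equation \eqref{eq:score} for $\Psi^{\mathrm P}_\tau$. By Proposition~\ref{prop:existence}, this equation has the unique solution $\hat\mu_\tau$.

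Next I would show strict descent: if $T\theta=\theta'\ne\theta$, then $F_\tau(\theta')<F_\tau(\theta)$. This follows because the surrogate is strictly convex, so $G(\theta')<G(\theta)$.

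The values $F_\tau(\theta^{(t)})$ decrease and are bounded below, so they converge to some limit $F^\ast$. Take a cluster point $\theta^\ast$ of the iterates, along a subsequence $t_i$. By continuity of $T$ and $F_\tau$, we get $F_\tau(T\theta^\ast)=\lim F_\tau(\theta^{(t_i+1)})=F^\ast=F_\tau(\theta^\ast)$. Strict descent then forces $T\theta^\ast=\theta^\ast$, so $\theta^\ast=\hat\mu_\tau$.

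Every cluster point therefore equals $\hat\mu_\tau$. Together with compactness of $C$, this shows the whole sequence converges to $\hat\mu_\tau$.

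\textbf{Main obstacle.} The delicate step is the identification of fixed points with minimizers, together with the strict-descent argument. For canonical Huber the same argument goes through verbatim: $w_\tau$ is continuous and positive there too. That case only yields monotonicity, however, which is all the statement claims. In the pseudo-Huber case I must check that $w_\tau>0$ everywhere, including at $r=0$, so that no Weiszfeld-type singularity arises when an iterate coincides with some $Z_j$. This is exactly where pseudo-Huber is cleaner than the geometric median.
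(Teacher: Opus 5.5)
Your proof is correct and shares the paper's skeleton. Both use the same tangent majorizer from concavity of $g_\tau(s)=\rho_\tau(\sqrt s)$, confine the iterates to the compact hull $C$, and identify fixed points of the update with score-equation solutions, hence with the unique pseudo-Huber minimizer. One small point: the direction ``score solution $\Rightarrow$ minimizer'' is the sufficiency of the first-order condition for the convex differentiable $F_\tau$, not Proposition~\ref{prop:existence} itself, but this is standard.

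The genuine difference is how you show that a cluster point $\theta^\ast$ is a fixed point. The paper proves a quantitative sufficient-decrease inequality. On the residual range generated by $C$ the weights are bounded below, so each quadratic surrogate is uniformly strongly convex and $F_\tau(\theta^{(t)})-F_\tau(\theta^{(t+1)})\ge c_C\norm{\theta^{(t+1)}-\theta^{(t)}}^2$. Successive steps therefore vanish, and the successors along a convergent subsequence tend both to $\theta^\ast$ and to its update. You instead use a Zangwill-type argument. Strict descent off the fixed-point set, together with continuity of $F_\tau$ and of the update map $T$, forces $F_\tau(T\theta^\ast)=F_\tau(\theta^\ast)$ and hence $T\theta^\ast=\theta^\ast$.

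Your route needs only pointwise positivity and continuity of $w_\tau$, with no uniform constant. Applied verbatim to canonical Huber, it shows that every cluster point is a minimizer, which is more than the bare monotonicity you concede. Only full-sequence convergence is lost there, because canonical minimizers can form a continuum.

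The paper's route gives a quantitative by-product that yours does not. Summing its inequality yields $\sum_t\norm{\theta^{(t+1)}-\theta^{(t)}}^2\le\{F_\tau(\theta^{(0)})-\min F_\tau\}/c_C$. Consequently the stopping rule in Algorithm~\ref{alg:homer-mm} is met within $O(\varepsilon^{-2})$ iterations.
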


\begin{proof}
For $g_\tau(s)=\rho_\tau(\sqrt s)$, direct differentiation shows that $g_\tau$ is concave on $[0,\infty)$. Hence
\[
    g_\tau(s)\le g_\tau(s_0)+g_\tau'(s_0)(s-s_0).
\]
Applying this inequality to each squared residual at $\theta^{(t)}$ gives a quadratic majorizer. Its minimizer is exactly \eqref{eq:mm-update}. The majorization inequality proves monotone descent. All iterates lie in the compact convex hull $C$ of the block means. For pseudo-Huber, the weights are continuous and uniformly positive on the bounded residual range generated by $C$. Strong convexity of each quadratic surrogate therefore yields, for some $c_C>0$,
\[
    F_\tau(\theta^{(t)})-F_\tau(\theta^{(t+1)})
    \ge c_C\norm{\theta^{(t+1)}-\theta^{(t)}}^2.
\]
Thus successive differences vanish. Suppose a subsequence converges to $\theta^\star$. By continuity, its one-step successors converge to both $\theta^\star$ and its update. Hence $\theta^\star$ is a fixed point and satisfies the score equation. Strict convexity gives only one such point. Therefore, every cluster point is the unique HOMER minimizer, and the full sequence converges to it.
\end{proof}

\section{DATA-DRIVEN THRESHOLDS}\label{app:tuning}

The oracle robust threshold in Theorem~\ref{thm:high-prob} is of order $\sigma/\sqrt m$. In practice $\sigma$ is unknown. We recommend a robust block-scale estimate:
\begin{enumerate}[leftmargin=*]
    \item Compute a preliminary center $\tilde\mu$, either MOM or a very small-threshold HOMER estimate.
    \item Compute distances $d_j=\norm{Z_j-\tilde\mu}$.
    \item If the median is positive, set $\hat\tau=c\cdot \mathrm{median}_j d_j$ for $c\in\{1,2,4,8\}$.
    \item If the median vanishes but some distance is positive, use the median positive distance. If all distances vanish, return their common center.
\end{enumerate}
The released routine substitutes the positive value one in the all-zero branch. All block summaries then coincide, so its final center is identical. The recorded fallback threshold is not a data-scale estimate.
This tuning rule is not covered by Theorem~\ref{thm:high-prob}, Theorem~\ref{thm:fixed-m-inference}, or Corollary~\ref{cor:triangular}. The first theorem uses an oracle block scale, while the inference results fix $\lambda$. The first and second simulated examples vary fixed thresholds $\tau=\lambda/\sqrt m$ under normalized scales. These curves isolate threshold behavior but do not compare an oracle selector with an adaptive rule.

A Lepski-type alternative computes $\hat\mu_{\tau_\ell}$ over a grid $\tau_1<\cdots<\tau_L$. It selects the smallest $\tau_\ell$ whose estimate is stable relative to larger thresholds. This approach may formalize the robustness and efficiency tradeoff without prior scale knowledge.

\raggedbottom
\section{ADDITIONAL EXPERIMENTAL DETAILS}\label{app:experiments}

The first three studies expand the examples in the main paper. The remaining studies cover covariance operators, kernel embeddings, distributed gradients, financial returns, demand curves, and classification.

\subsection{Finite-dimensional Hilbert-mean simulation}

The full robustness profile uses $d\in\{100,500,2000\}$ and $\alpha\in\{0.75,1,1.5\}$. The paper profile fixes $d=500$ and $\alpha=1$ for primary sweeps. Inference uses $d=100$, while the paper sensitivity analysis varies $d$ with $\alpha=1$. Set $\kappa_\ell=C\ell^{-2\alpha}$ with $\sum_\ell\kappa_\ell=1$. Mean coefficients satisfy $\mu_\ell\propto\ell^{-\alpha-1/2}$ and $\norm\mu=1$. Generate
\[
    X=\mu+S K^{1/2}\xi,
\]
where $K=\operatorname{diag}(\kappa_1,\ldots,\kappa_d)$ and $\xi$ has standardized coordinates. Regimes include Gaussian, variance-normalized Student mixtures, Pareto-3, and skewed two-point coordinates. Pareto-3 has finite variance and infinite fourth moment. Complete-block contamination adds the same shift to every observation in each selected block.

The configured but unreported paper profile assigns 500 replications to primary sweeps. Its sensitivity sweep uses 100 replications. The executed reduced grid instead uses 500 replications throughout. In the paper profile, population block-Huber targets use 20,000 Monte Carlo draws. Coverage intervals exclude target simulation error. By symmetry, Gaussian, Student, and Pareto-3 targets equal the mean. Any simulated nonzero shift in those regimes is Monte Carlo error. The default $k=16$ remains fixed. Consequently, coverage does not instantiate $k,m\to\infty$. Report realized contamination $\lfloor\epsilon k\rfloor/k$, not only nominal $\epsilon$.

Figure~\ref{fig:hilbert-mean-supplement} reports additional reduced-grid diagnostics beyond Figure~\ref{fig:hilbert-mean-overview}. No method uniformly dominates in the clean-tail panel. As $c$ grows, both HOMER variants approach empirical-mean efficiency. MOM remains the median endpoint throughout the threshold path. Panel (d) reports estimated target displacement. Under symmetric laws, its population value is zero. The plotted Gaussian curve therefore measures Monte Carlo target error rather than bias.

\begin{figure*}[t]
\centering
\resizebox{\textwidth}{!}{\input{figures/hilbert_mean_supplement.pgf}}
\caption{\textbf{Supplementary Hilbert-mean diagnostics.}
(a) Student-$t_3$ complete-block contamination. (b) Clean tail regimes. (c) Spectral-decay sensitivity on the reduced grid. (d) Monte Carlo estimate of finite-block target displacement. Under symmetric laws, the curves reflect simulation error around the theoretical value zero. Error summaries use 500 replications.}
\label{fig:hilbert-mean-supplement}
\end{figure*}

\subsection{Functional-mean simulation}

Generate random functions on $[0,1]$ in a real orthonormal Fourier basis,
\[
    X(t)=\mu(t)+S\sum_{\ell=1}^{d}\sqrt{\kappa_\ell}\xi_\ell e_\ell(t).
\]
Computations use coefficient space, where Euclidean error equals truncated $L^2$ error. Inference targets an interval average, a Gaussian-smoothed evaluation, and an evening-minus-morning contrast. Literal point evaluation is not continuous on abstract $L^2$. The paper profile sets $k=16$ and uses 15,000 Monte Carlo draws for each block-Huber target. It does not propagate target simulation error. Coverage is therefore a finite-$k$ diagnostic, not a triangular-array validation.

The executed reduced grid uses $n=256$, $d=21$, $k=8$, 500 replications, and 15,000 target draws. Figure~\ref{fig:functional-mean-curves} includes a representative Student-$t_3$ fit with two of eight blocks contaminated. Replicated results are more informative than the representative fit. The empirical 95th-percentile $L^2$ error was $2.524$, versus $0.132$ for MOM, $0.191$ for canonical HOMER, and $0.211$ for pseudo-HOMER. The coverage panel documents the observed undercoverage.

\begin{figure*}[t]
\centering
\resizebox{\textwidth}{!}{\input{figures/functional_mean.pgf}}
\caption{\textbf{Functional-mean simulation.}
(a) Representative Student-$t_3$ curve with two of eight complete blocks contaminated. (b) Replicated functional robustness over 500 Monte Carlo replications. (c) Fixed-$k$ coverage for the evening-minus-morning contrast. Literal point evaluation is not used.}
\label{fig:functional-mean-curves}
\end{figure*}

\subsection{Wearable-sensor trajectories}

The analysis uses the UCI HAR Using Smartphones data \citep{jorgereyes-ortiz_2013_HumanActivityRecognition}. Each 2.56-second window contains inertial signals sampled at 50Hz. The study retains six body-acceleration and gyroscope channels. Subjects, rather than overlapping windows, are the independent units. Windows are aligned and averaged within subject and activity before estimating the activity-specific mean trajectory.

Full activity fits use 30 subjects in five equal blocks of size six. Leave-one-subject-out fits repartition 29 subjects into five near-equal blocks. Their stability results therefore include partition variation. Corrupted subject rows and failed channels are redrawn by activity. Thus, the stress does not represent one persistent participant-level failure across activities. The energy summary first squares each subject's averaged trajectory, then averages across subjects. It is not mean window-level energy. Energy intervals use five block summaries and a data-driven threshold. We treat them as exploratory Wald diagnostics without theorem-based coverage claims. Classification removes the held-out subject from every template. Methods sample held-out windows separately. Their accuracy comparisons are therefore descriptive, not exactly paired.

At $c=2$ with 10\% subject-spike stress, MOM moved $0.283$ from the unstressed WALKING reference. Ordinary averaging moved $0.683$. Both HOMER variants moved about $0.53$. At 20\%, the four distances were already similar. At 30\%, the robust outer centers were no better. Sensor-failure classification likewise shows only small differences among estimators. Once corrupted subjects enter most blocks, robust outer aggregation cannot repair ordinary within-block means. Figure~\ref{fig:wearable-trajectories} summarizes these comparisons.

\begin{figure*}[t]
\centering
\resizebox{\textwidth}{!}{\input{figures/wearable_trajectories.pgf}}
\caption{\textbf{Subject-level wearable trajectories.}
(a) Representative WALKING fit under subject-spike stress. (b) Distance to the unstressed empirical trajectory as corrupted subjects increase at $c=2$. (c) Held-out nearest-template accuracy under sensor failure. Panels (a) and (b) use a different stress mechanism from panel (c). All comparisons are descriptive.}
\label{fig:wearable-trajectories}
\end{figure*}

\subsection{Covariance operators and PCA}

Covariance estimation becomes a Hilbert-mean problem after lifting observations to rank-one operators. This study reports covariance error, eigenspace stability, and the rank-one centering term from an independent split.

Generate centered high-dimensional or functional observations. Form $Y_i=X_i\otimes X_i$ in Hilbert-Schmidt space. Apply empirical averaging, geometric MOM, and HOMER to block covariance summaries. For an unknown mean, estimate it on an independent split. Report $\norm{\tilde\mu-\mu}^2$, the Hilbert-Schmidt norm of its rank-one centering term. Evaluate
\[
    \norm{\hat\Sigma-\Sigma}_{\mathcal S_2},
    \qquad
    \norm{\hat\Sigma-\Sigma}_{\op},
    \qquad
    \norm{\hat P_r-P_r}_{\mathcal S_2}.
\]
Vary eigengap, spectral decay, tail index, and complete-block positive-semidefinite contamination. Pareto settings probe the fourth-moment boundary required for a covariance-operator second moment.

At the small-gap cell, the population eigengap was $0.0458$. Across 500 replications, corrupting two of eight Student-$t_5$ covariance summaries raised empirical 95th-percentile Frobenius error from $0.223$ to $3.007$. The stressed errors were $0.217$ for MOM, $0.345$ for canonical HOMER, and $0.378$ for pseudo-HOMER. MOM had the lowest stressed error under this severe positive-semidefinite spike. Increasing the eigengap from $0.0458$ to $0.0627$ reduced MOM's mean projector error from $0.915$ to $0.709$. The displayed $c=2$ HOMER fits remained near $1.42$. Stable covariance norm error need not yield a stable leading subspace when the eigengap is small.

\begin{figure*}[t]
\centering
\resizebox{\textwidth}{!}{\input{figures/covariance_pca.pgf}}
\caption{\textbf{Covariance operators and PCA.}
(a) Student-$t_5$ Frobenius robustness with $n=256$, $d=20$, $k=8$, known center, and population eigengap $0.0458$. (b) Rank-three projector error under two contaminated summaries. (c) Clean unknown-center decomposition by tail regime. In this diagnostic panel, ``mean covariance error'' averages only the four displayed methods. All Monte Carlo summaries use 500 replications.}
\label{fig:covariance-pca}
\end{figure*}

\subsection{RKHS mean embeddings and MMD}

Exact block Gram matrices permit outer aggregation without explicit feature coordinates. This study compares kernel-mean error with the resulting MMD-squared error.

Let $W_i\in\R^p$ and $X_i=K(W_i,\cdot)$ for Gaussian or Laplace kernels. Use linear and polynomial kernels as heavy-feature stress tests. Compute the exact $k\times k$ Gram matrix among block embeddings,
\[
    \ip{Z_j}{Z_\ell}_{\mathcal K}
    =\frac1{m^2}\sum_{i\in B_j}\sum_{i'\in B_\ell}K(W_i,W_{i'}),
\]
and perform outer aggregation using only this matrix. Compare empirical KME, a Weiszfeld geometric-MOM approximation, and HOMER. The code does not implement the min-max MONK estimator. Its epsilon-stabilized Weiszfeld step can stop at a coincident nonoptimal embedding. Exact geometric-MOM comparisons require a subgradient correction in that case. Gaussian-data RBF targets and linear-kernel targets are analytical. In the paper profile, other targets use 750 independent reference draws. Their reported MMD-squared errors include reference-sample error and V-statistic bias. Optional permutation values are descriptive, and no finite-sample level guarantee is claimed.

With two of eight RBF block embeddings replaced by outlier clusters, the empirical 95th-percentile RKHS error rose from $0.090$ to $0.311$. The stressed errors were $0.122$ for the Weiszfeld geometric-MOM approximation, $0.177$ for canonical HOMER, and $0.182$ for pseudo-HOMER. Mean absolute MMD-squared error under the same stress was $0.094$ for empirical averaging, compared with $0.016$, $0.020$, and $0.021$. The linear kernel magnified the same separation. Panel (c) times only outer aggregation after the Gram matrices have been formed. It is not a Gram-construction benchmark.

\begin{figure*}[t]
\centering
\resizebox{\textwidth}{!}{\input{figures/rkhs_embeddings.pgf}}
\caption{\textbf{Robust kernel means and MMD.}
(a) RBF embedding error under complete-block outlier clusters. (b) Mean absolute MMD-squared error across the RBF and linear kernels with two of eight blocks contaminated. (c) Clean outer-aggregation time at $n=128$. Sample and block Gram construction is excluded. The reduced grid uses 500 replications.}
\label{fig:rkhs-embeddings}
\end{figure*}

\subsection{Distributed gradient aggregation}

Worker mini-batch gradients are Hilbert-valued block summaries, directly matching the aggregation setting. A short repeated-update diagnostic records downstream behavior without asserting convergence.

At a fixed parameter $\theta_0$, draw heavy-tailed stochastic gradients with mean $g(\theta_0)$. Treat worker or mini-batch averages as block summaries. Compare empirical, geometric-median, and HOMER gradients under clean, heavy-tailed, and worker-corrupted regimes. The primary metric is $\norm{\hat g-g(\theta_0)}$. Secondary metrics are one-step objective decrease and an illustrative repeated trajectory for $f(\theta)=\norm{\theta}^2/2$.

With two of eight corrupted workers, empirical Student-$t_3$ 95th-percentile gradient error was $3.025$. The corresponding errors were $0.131$ for the geometric median, $0.188$ for canonical HOMER, and $0.208$ for pseudo-HOMER. Clean errors were between $0.102$ and $0.107$, and clean one-step objective decreases were nearly identical. Across five contaminated iterations, empirical averaging increased the mean quadratic objective from $0.5$ to $4.166$. The final mean objective was $0.00124$ for the median, $0.00503$ for canonical HOMER, and $0.00671$ for pseudo-HOMER. Each step draws fresh gradient noise and corrupted workers. This diagnostic does not establish a Byzantine optimization theorem.

\begin{figure*}[t]
\centering
\resizebox{\textwidth}{!}{\input{figures/distributed_gradients.pgf}}
\caption{\textbf{Distributed gradient summaries.}
(a) Student-$t_3$ pointwise aggregation error. (b) Clean one-step progress. (c) Illustrative quadratic-objective paths with two of eight workers corrupted at each iteration. The reduced grid uses 500 replications. Panel (c) averages independently redrawn paths.}
\label{fig:distributed-gradients}
\end{figure*}

\subsection{Industry portfolio returns}

Industry returns provide an unknown-target, serially dependent setting where crisis periods can dominate means, covariances, and leading subspaces. This study evaluates stability under chronological stress and does not invoke the iid theorem.

The analysis uses daily returns for the 49 Industry Portfolios from the Kenneth French Data Library \citep{fama_1993_CommonRiskFactors}. Each day is a vector in $\R^{49}$. The study estimates mean returns, sample-split covariances, and leading components. Estimation uses 1,260-day training windows, 252-day holdouts, and contiguous blocks. Evaluation uses holdout error, long-window references, circular moving-block-bootstrap variability, and adjacent-window stability. Stress tests add crash shifts or replace raw block observations with an empirical crisis segment. The saved centering field is squared disagreement between robust and empirical first-half centers. It is not population rank-one centering error.

The 31-window averages in Figure~\ref{fig:industry-returns} use $c=2$. Selecting four of 16 training blocks raised empirical mean holdout error to $0.2185$ under the crash shift. MOM, canonical HOMER, and pseudo-HOMER remained at $0.0080$, $0.0104$, and $0.0115$. Covariance error was $0.1786$ for empirical averaging, $0.0075$ for MOM, and $0.0130$ for pseudo-HOMER. Improvements in the leading rank-three subspace were more modest. The stress fraction counts the original 16 training blocks. After covariance sample splitting, it need not equal the fraction of affected covariance summaries.

\subsection{Bike-demand curves}

Daily demand curves provide an interpretable functional target, but adjacent days are dependent and the next-window target changes. This study reports robustness and interval stability under chronological splits without claiming time-series coverage.

The analysis uses complete 24-hour UCI Bike Sharing curves after applying $\log(1+\mathrm{count})$ by default \citep{fanaee-t_2014_EventLabelingCombining}. Groups include overall, working-day, nonworking-day, holiday, nonholiday, and season. Because it uses $\mathrm{workingday}=0$, the code's ``weekend'' group is the nonworking-day group. The morning functional averages hours 7 through 9 rather than taking a maximum. Weather stress prioritizes blocks containing severe-weather days. If too few exist, selection falls back to all blocks. Every day in each selected block is rescaled. With $k=8$, nominal 10\% stress perturbs no blocks and is merged with the clean plot point. Next-window ``coverage'' compares a training interval with a noisy holdout mean and omits holdout uncertainty.

Across 15 overlapping windows, event-spike holdout error increased from $1.5803$ to $1.7653$ for the empirical mean when two of eight blocks were perturbed. The corresponding errors were $1.4939$ for MOM, $1.6044$ for canonical HOMER, and $1.5942$ for pseudo-HOMER. In the unstressed $c=2$ diagnostic, pseudo-HOMER covered the 7 to 9 a.m. average in 5 of 15 windows. It covered the 5 to 7 p.m. minus 7 to 9 a.m. contrast in 7 of 15 windows. Both rates were far below 0.95. These intervals are dependence-sensitive stability summaries, not confidence intervals for future demand.

\subsection{Class-conditional kernel embeddings}

Class-conditional kernel means connect Hilbert estimation to a nearest-embedding classifier and pairwise MMD-squared comparisons. This study varies only the outer aggregation on a subject-disjoint benchmark.

The analysis uses UCI HAR features \citep{jorgereyes-ortiz_2013_HumanActivityRecognition} with the official subject-disjoint split and training-only standardization. It estimates class-conditional RBF kernel means under label noise and outlier-cluster stress. The code equally averages block embeddings for its mean baseline. With unequal class blocks, this differs from the empirical KME. We therefore call it the \emph{equal-block mean}. The Weiszfeld geometric-MOM approximation and HOMER share each scenario's training data, partition, and Gram matrices. Clean-to-stressed displacement also includes separately generated partition variation.

At $k=16$, $c=2$, and the median-distance bandwidth, all four aggregation rules gave similar nearest-embedding accuracy. At 20\% label noise, accuracy ranged from $81.56\%$ to $81.78\%$. At 20\% outlier contamination, mean clean-to-stressed RKHS shifts ranged from $0.04405$ to $0.04428$. Canonical HOMER coincided numerically with the equal-block mean along the displayed paths. This numerical coincidence is consistent with its quadratic-basin endpoint. The WALKING-versus-WALKING\_UPSTAIRS MMD-squared shift was more sensitive to bandwidth. For the equal-block mean, it was $0.00549$, $0.000163$, and $0.000271$ at multipliers $0.5$, $1$, and $2$. Stress affects individual windows rather than a minority of complete blocks. Because correlated windows remain in the HAR split, these results provide only a descriptive sensitivity analysis.

\subsection{Metrics, plots, and reporting}

The displayed finite-dimensional, functional-mean, covariance, kernel-embedding, and distributed-gradient simulations use compact grids with 500 replications per cell. The text and manifests identify them as reduced-grid results. The larger \texttt{paper} and \texttt{full} grids remain reproducible configurations, not reported runs. The finite-dimensional simulation records mean, median, 90th-percentile, and 95th-percentile norm error. Other studies retain the task-specific summaries used by their figures and downstream analyses. A nominal whole-block contamination level $\epsilon$ is plotted at the realized fraction $\lfloor\epsilon k\rfloor/k$. Under symmetric laws, target displacement in the finite-dimensional simulation is Monte Carlo approximation error around the theoretical value zero. The bike-demand study uses ``nonworking day'' and ``7 to 9 a.m. average.'' The class-conditional kernel study names both baselines explicitly and reports shifts in squared MMD. Pseudo-Huber is the inferential estimator. Mean coverage is separated from finite-block-target coverage under skewness. Unless a stated theorem applies, real-data intervals, holdout checks, and stress paths are descriptive.

\begin{figure*}[p]
\centering
\resizebox{\textwidth}{!}{\input{figures/industry_returns.pgf}}
\caption{\textbf{Industry-return stability under artificial crash shifts.}
Mean holdout error, covariance holdout error, and rank-three projector error averaged over 31 overlapping rolling windows. The horizontal axis is the realized fraction of the 16 original training blocks selected for stress. These are descriptive dependent-data comparisons.}
\label{fig:industry-returns}
\end{figure*}

\begin{figure*}[p]
\centering
\resizebox{0.99\textwidth}{!}{\input{figures/bike_demand.pgf}}
\caption{\textbf{Bike-demand curves.}
(a) Representative overall-demand fit with one of eight event-spike blocks. (b) Holdout curve error over 15 overlapping windows. The horizontal axis is the realized block fraction. (c) Descriptive unstressed next-window coverage for the 7 to 9 a.m. average and the 5 to 7 p.m. minus 7 to 9 a.m. contrast.}
\label{fig:bike-demand}
\end{figure*}

\begin{figure*}[p]
\centering
\resizebox{0.99\textwidth}{!}{\input{figures/class_conditional_kernel_embeddings.pgf}}
\caption{\textbf{Class-conditional HAR kernel embeddings.}
(a) Held-out nearest-embedding accuracy under label noise. (b) Mean clean-to-stressed RKHS displacement under outlier contamination. (c) Absolute shift in squared MMD between WALKING and WALKING\_UPSTAIRS. Panels use $k=16$ and $c=2$. The legend distinguishes the equal-block mean from the Weiszfeld geometric-MOM approximation.}
\label{fig:class-conditional-kernel}
\end{figure*}

\end{document}